\pdfminorversion=4   

\documentclass[10pt,a4paper,conference]{IEEEtran}
%


%

%
\usepackage{cite}

%
\ifCLASSINFOpdf
  \usepackage[pdftex]{graphicx}
\else
\fi
%
%

%
\usepackage{amsmath,amssymb,amsthm}
%

%
\usepackage{algorithm,algorithmic}
\ifCLASSOPTIONcompsoc
  \usepackage[caption=false,font=normalsize,labelfont=sf,textfont=sf]{subfig}
\else
  \usepackage[caption=false,font=footnotesize]{subfig}
\fi
\theoremstyle{theorem}
\newtheorem{prop}{Proposition}
\newtheorem*{prop*}{Proposition}
\newtheorem*{cor*}{Corollary}
\newtheorem*{remk*}{Remark}

\theoremstyle{definition}
\newtheorem*{note*}{Note}

\DeclareMathOperator{\Adam}{Adam}

\usepackage{multirow} 
\usepackage{booktabs}

\usepackage{xcolor}


%
\usepackage{url}


\hyphenation{op-tical net-works semi-conduc-tor}

\begin{document}
%
\title{Discriminative Multi-level Reconstruction \\ under Compact Latent Space \\ for One-Class Novelty Detection}

\author{\IEEEauthorblockN{Jaewoo Park, Yoon Gyo Jung, Andrew Beng Jin Teoh}
School of Electrical and Electronic Engineering, Yonsei University, Seoul, Korea\\
Email: \{julypriase,jungyg,bjteoh\}@yonsei.ac.kr
}


\maketitle

\begin{abstract}
In one-class novelty detection, a model learns solely on the in-class data to single out out-class instances. Autoencoder (AE) variants aim to compactly model the in-class data to reconstruct it exclusively, thus differentiating the in-class from out-class by the reconstruction error. However, compact modeling in an improper way might collapse the latent representations of the in-class data and thus their reconstruction, which would lead to performance deterioration. Moreover, to properly measure the reconstruction error of high-dimensional data, a metric is required that captures high-level semantics of the data. To this end, we propose Discriminative Compact AE (DCAE) that learns both compact and collapse-free latent representations of the in-class data, thereby reconstructing them both finely and exclusively. In DCAE, (a) we force a compact latent space to bijectively represent the in-class data by reconstructing them through internal discriminative layers of generative adversarial nets. (b) Based on the deep encoder's vulnerability to open set risk, out-class instances are encoded into the same compact latent space and reconstructed poorly without sacrificing the quality of in-class data reconstruction. (c) In inference, the reconstruction error is measured by a novel  metric that computes the dissimilarity between a query and its reconstruction based on the class semantics captured by the internal discriminator. Extensive experiments on public image datasets validate the effectiveness of our proposed model on both novelty and adversarial example detection, delivering state-of-the-art performance.

%
%
%
\end{abstract}


%
\IEEEpeerreviewmaketitle

\section{Introduction}

\begin{figure*}[!t]
\centering
\subfloat[]{\includegraphics[width=0.25\textwidth]{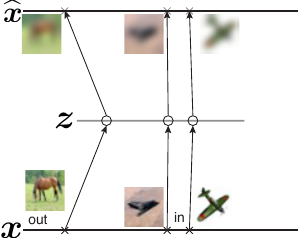}}
\hspace{5mm}
\subfloat[]{\includegraphics[width=0.25\textwidth]{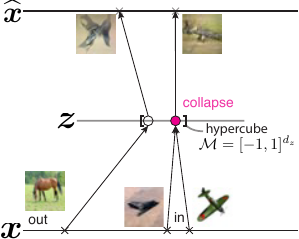}} 
\hspace{5mm}
\subfloat[]{\includegraphics[width=0.25\textwidth]{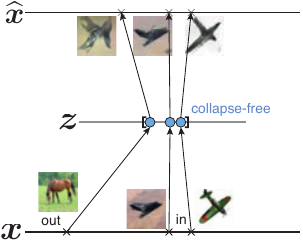}}
\caption{
(a) vanilla AE, (b) AE with its encoder output improperly constrained, (c) ours. In this schematic diagram, in the in-class data consists of plane while an horse image is given as an out-class instance. For the vanilla AE in (a), the reconstruction quality is similar for both in-class and out-class samples. In (b), the AE poorly reconstructs both the in-class and out-class instances. In both (a) and (b), effective novelty detection is not allowed as the reconstruction error does not differentiate between in-class and out-class samples. In (c), however, fine reconstruction is performed exclusively for the in-class data, allowing the model to successfully detect out-class instances by the reconstruction error.
}
\label{fig: comparison}
\vspace{-0.20in}
\end{figure*}

Novelty detection is a task to detect an incoming signal that deviates from the underlying regularity of a known class \cite{lsa}. \textit{One-class} novelty detection, in particular, assumes an additional constraint that only the known, \textit{in-class} samples are available for training \cite{ocgan}. In the inference stage, the trained system needs to detect \textit{out-class} samples, differentiating them from the in-class data. Due to the absence of out-class knowledge, the one-class novelty detection problem is of unsupervised learning and highly challenging. The range of one-class novelty detection application is diverse from medical data processing \cite{app_medical, anogan, fanogan} to intruder detection \cite{app_intruder01, app_intruder02}, abnormality detection \cite{app_abnormal}, and fraud detection \cite{app_fraud}. Moreover, novelty detection has a deep root in neuroscience \cite{neuro_science_hippo} as it is one of the core neural mechanisms of intelligent beings .

Many successful methods in novelty detection follow one of the following two approaches.
In the first strategy, a density function of the in-class data is modeled, and then a query located on the low-density region is classified as out-class \cite{low_density_rejection,density_nonparam,density_param,density_graph,density_gaussian_classifier}.
The second strategy is by reconstruction-based methods \cite{kpca,rpca,ae_tokyo,ocgan}, the core principle of which is to design a mapping that is invertible exclusively over the in-class manifold.
To differentiate in-class and out-class samples, 
the models often come together with a score function that measures the novelty of a query, which can be either
sample-wise reconstruction loss used in the training of their models \cite{ae_tokyo}, a score derived by an independent module \cite{adv_learned_classifier,ocgan}, or a mixture of them \cite{lsa}.

As to the reconstruction-based approach, most of the models follow the paradigm of compact representation learning \cite{compact_rep_kernel_two_sample} to acquire a function that reconstructs the in-class data only. Its latent representations are learned to be compact in the sense that they are so condensed as to represent the in-class data exclusively. For example, principal component analysis (PCA)-based methods \cite{bishop_pr,kpca,rpca} select a minimal number of eigen-axes by which to reconstruct the in-class data. The recent advances in deep learning \cite{deep_imagenet,deep_resnet} enabled the reconstruction-methods to learn compact representations in more diverse manners. The deep autoencoder (AE) achieves this goal by making its middle layer much lower-dimensional than its input dimension and thereby posing a bottleneck therein. Moreover, progresses in generative adversarial learning \cite{gan} enabled AE to learn compact latent representations \cite{ocgan}, showing promising results.

In OCGAN \cite{ocgan} particularly, any input is encoded to a bounded region by applying $\tanh$ activation, and all points in the region are decoded to in-class samples. Out-class instances are thus reconstructed to an in-class sample, resulting in large reconstruction error. However, the bounding constraint by $\tanh$ activation may cause collapse in the latent representations of in-class samples, causing their reconstruction in poor quality (Figure \ref{fig: comparison}(b)). As such an issue is reported in \cite{ocgan}, OCGAN seeks to resolve it by a complicated sampling technique, which, however, is insufficient to make OCGAN excel on complex dataset such as CIFAR-10, leaving room for improvement. Moreover, to infer the novelty of a given query in the inference stage, OCGAN resorts to an extra classifier on top of the dual GANs employed therein, making the model heavy.


To improve over the aforementioned issues, we propose Discriminative Compact Autoencoder, abbreviated by DCAE, that exclusively reconstructs the in-class data by learning their latent representations to be compact and collapse-free. DCAE utilizes its own internal module that captures class semantics of the in-class data  for both effective training and inference.

Our contributions are summarized as follows:

\begin{enumerate}
\item We propose to learn both \textit{compact} and \textit{collapse-free} latent representations of the in-class data so as to reconstruct them both finely and exclusively.

\item For inference, a novel measure of reconstruction error is proposed. The proposed measure evaluates the error between an input and its reconstruction by projecting onto the penultimate layer of the internal adversarial discriminator of DCAE. The projection provides the class semantics of an input query, allowing the measure to effectively differentiate the in-class from out-class. Moreover, we theoretically show that, due to Lipschitz continuity, reconstructing through multiple hidden layers of the discriminator during the training of DCAE improves the effectiveness of using the penultimate layer in inference.

\item Extensive experiments in public image data sets validate effectiveness of DCAE not only over novelty detection problem but also over the task of detecting adversarial examples, delivering the state-of-the-art results.
\end{enumerate}

We highlight that our problem to solve in this work is  \textit{unsupervised} one-class novelty detection. There are other, different settings for novelty detection: for example, semi-supervised novelty detection \cite{semi_anomaly,out_expose} allows to train with out-class data, and self-supervised novelty detection \cite{rotnet,rotnet_robust} allows a model to exploit supervisory signals inferred from a simple rule. Both settings require some amount of expert knowledge and/or human prior on a given training data. (For further discussion, see Supplementary \ref{supp_A}. In our  unsupervised one-class  setting, we only assume that a given training dataset is one-class (i.e., the known class).

\section{Method}

\begin{figure}[!tb]
\centering
\subfloat[]{\includegraphics[width=0.20\textwidth]{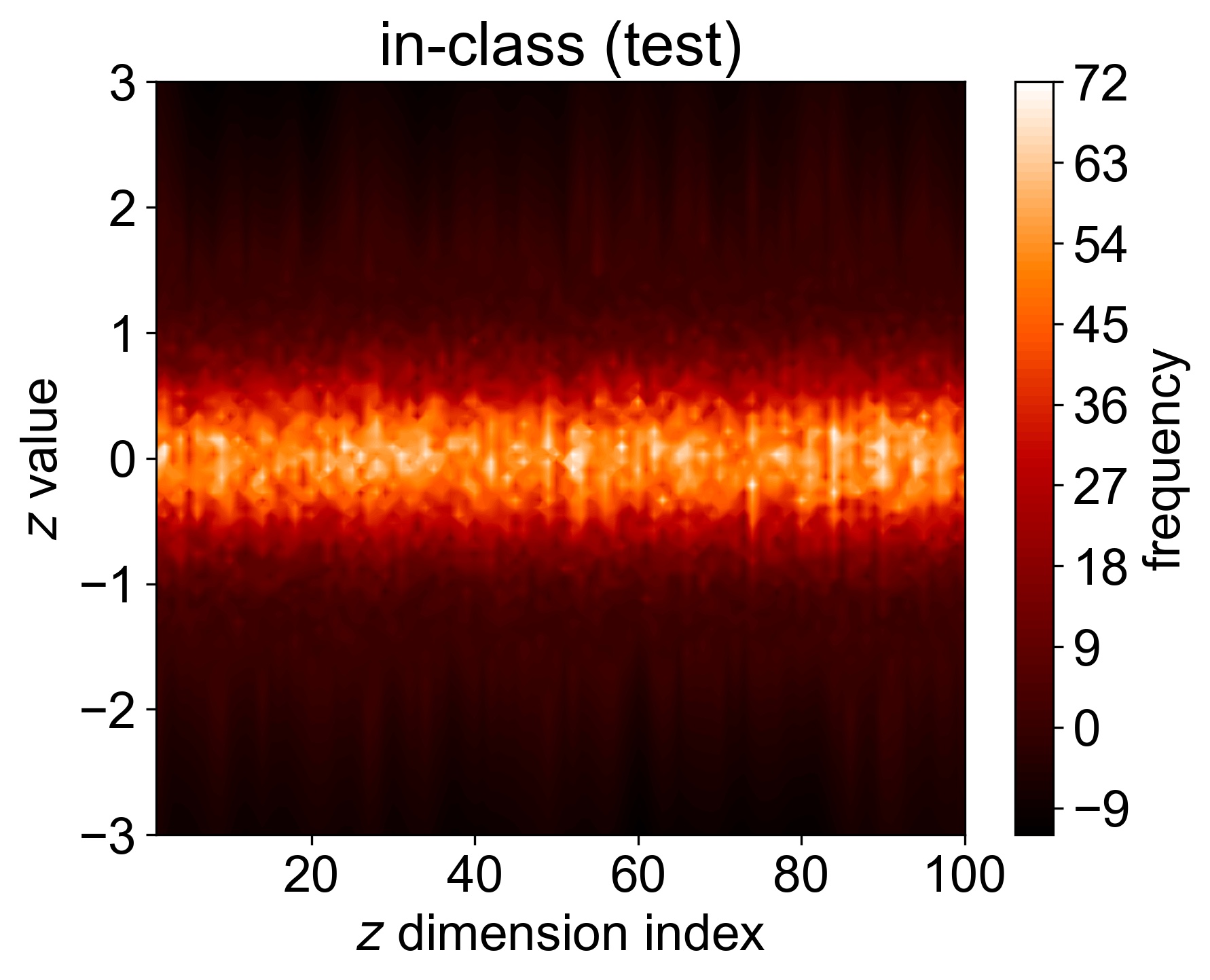}}
\hspace{0mm}
\subfloat[]{\includegraphics[width=0.20\textwidth]{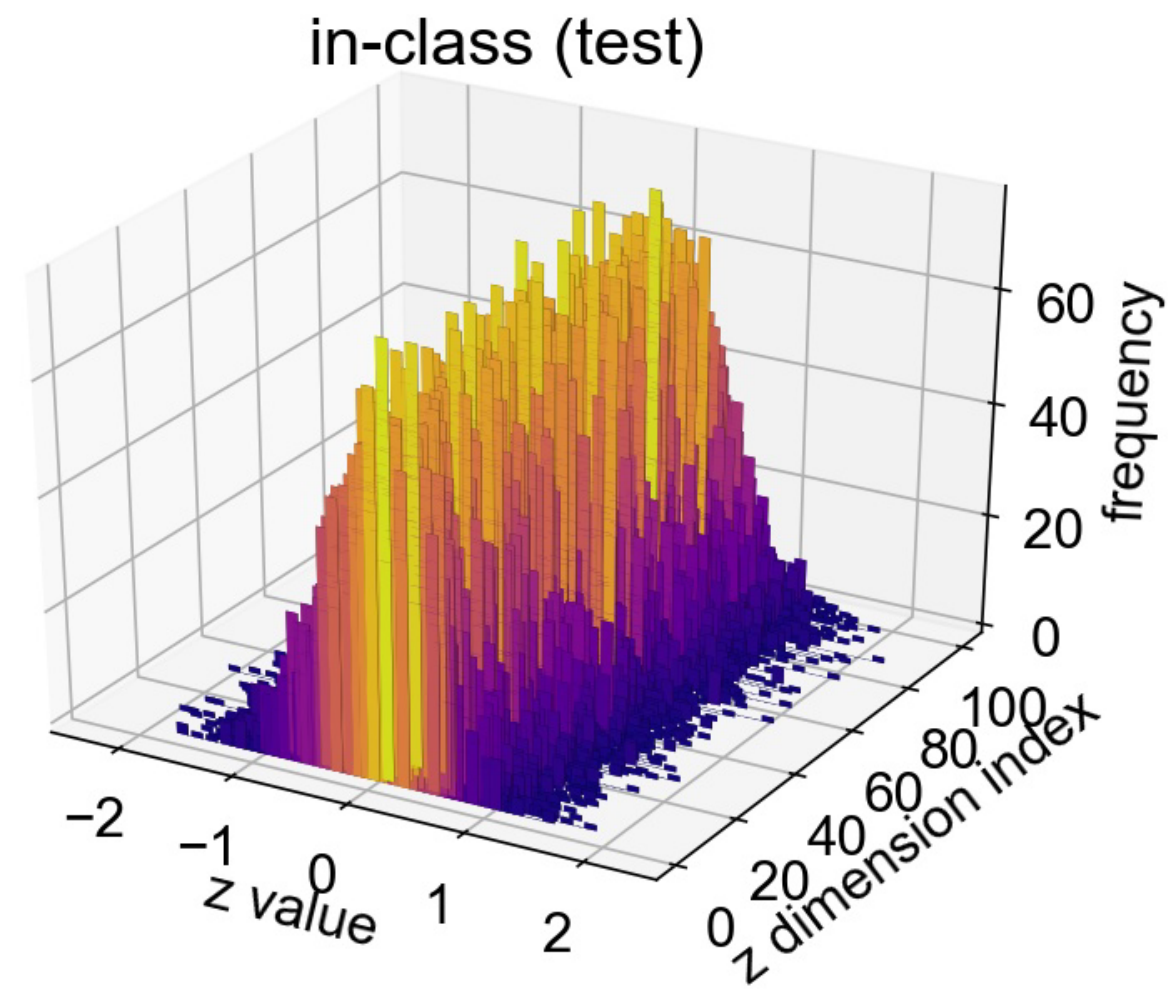}}  \\
\vspace{-0.05in}
\subfloat[]{\includegraphics[width=0.20\textwidth]{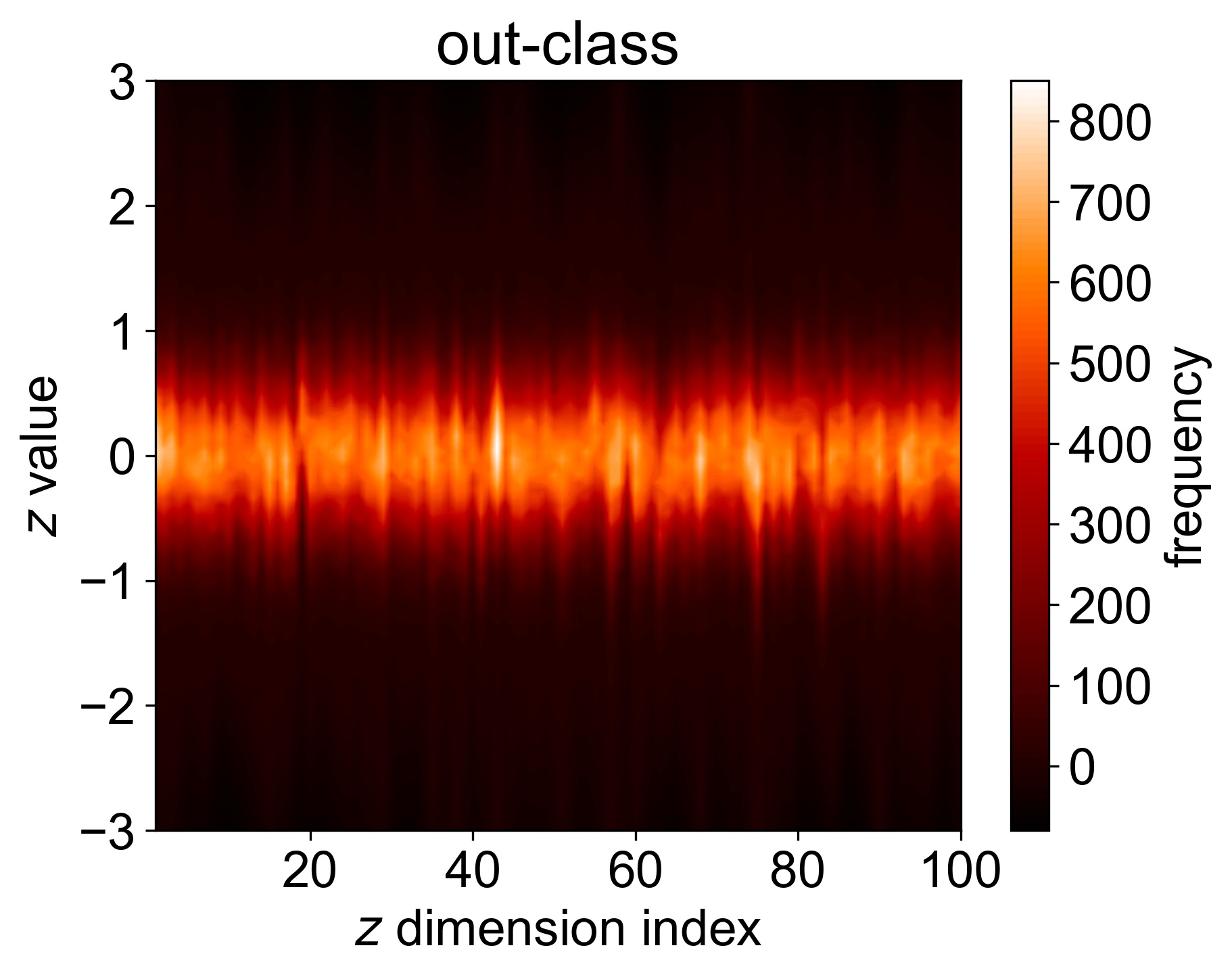}}
\hspace{0mm}
\subfloat[]{\includegraphics[width=0.20\textwidth]{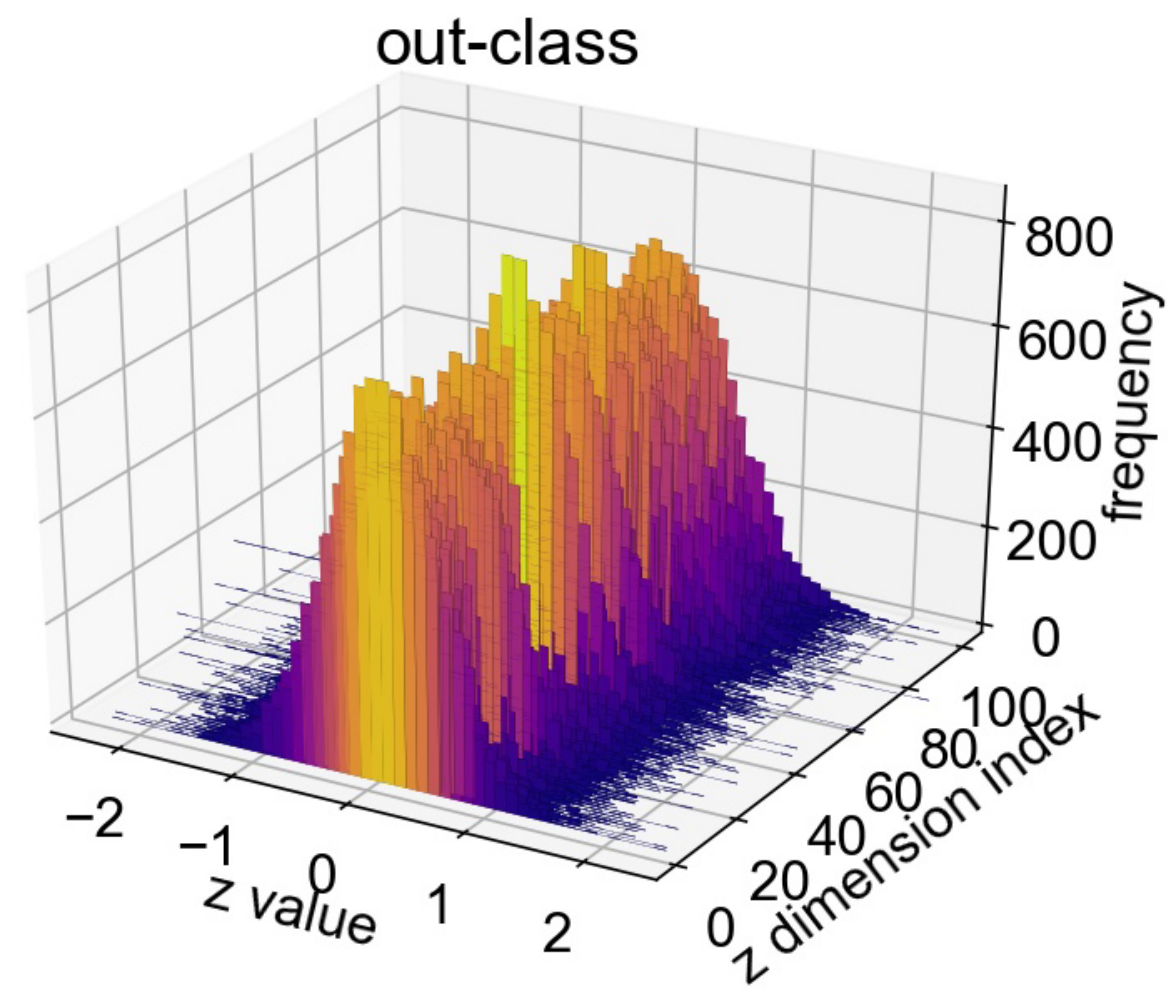}}
\caption{
Heat maps and histograms of scalar $z_k$ along the dimension index $k$ where $(z_1,\dots,z_k,\dots, z_{100}) = E(\boldsymbol{x}) \in \mathbb{R}^{100}$. For (a) and (b), $\boldsymbol{x}=\boldsymbol{x}_{in}$ are in-class test samples, and for (c) and (d) $\boldsymbol{x} = \boldsymbol{x}_{out}$ are out-class. The figures verify that the encoder outputs of not only $\boldsymbol{x}_{in}$ but also $\boldsymbol{x}_{out}$ are constrained into $\mathcal{M}=[-1,1]^{d_z}$ ($d_z =100$).
}
\label{fig: heat_map}
\vspace{-0.20in}
\end{figure}

Our method is divided into two parts: one for training the one-class model and the other for inference by measuring reconstruction error.

\subsection{Training}

The proposed model delivers to reconstruct the in-class data exclusively by (a) learning collapse-free (i.e. bijective) latent representations of the in-class data within a compact latent space and (b) constraining the latent representation of out-class instance into the same compact latent space. (a) ensures fine reconstruction of the in-class data while (b) forces the latent representation of out-class instances to be decoded to in-class samples, resulting in large reconstruction error thereof (as shown in Figure \ref{fig: comparison} (c)). For (a), firstly, the in-class data is bidirectionally represented by the compact latent space under dual GANs. Then, for fine reconstruction, they are reconstructed as projected onto multiple layers of the input discriminator. To prevent any collapse in the latent space, we linearly activates the output of the encoder and reconstruct the latent vectors. For (b), we specify our encoder to be deep so that it be vulnerable to open set risk and thus too fool to differentiate between the in-class samples and out-class instances, thereby encoding both to the same compact latent space. Due to the aforementioned bidirectional modeling of the in-class data, the latent representations of out-class instances are decoded to in-class samples, resulting in large reconstruction error of out-class instances.

The following are detailed steps to realize the desired mechanism.

\subsubsection{Bidirectional Representations of the In-Class Data by a Compact Latent Space}

To allow a given compact latent space $\mathcal{M}$, in our case a hypercube $\mathcal{M} = [-1,1]^{d_z}$, to bidirectionally represent the in-class data $\mathcal{X}_{in}$, we employ dual GANs with latent and input discriminators $D_z = D_z(\cdot; \theta_{D_z})$ and $D_x = D_x(\cdot ; \theta_{D_x})$ parametrized by $\theta_{D_z}$ and $\theta_{D_x}$, respectively. In particular, the following adversarial loss is optimized:
\begin{equation}
\underset{\theta_E,\theta_G}{\min} \,\, \underset{\theta_{D_z}, \theta_{D_x}}{\max} \,
L_{comp}=
L_{adv-z} + L_{adv-x}
\label{eq: loss_comp}
\end{equation}
where $\theta_E$ and $\theta_G$ are the weights parametrizing the encoder $E=E(\cdot;\theta_E)$ and decoder $G=G(\cdot; \theta_G)$. In Eq. \eqref{eq: loss_comp}, the latent adversarial loss $L_{adv-z}$ is defined as
\begin{multline}
L_{adv\text{-}z}(\theta_{D_z},\theta_E) \\
= -\frac{1}{N} \sum_{i=1}^{N} \left[  \log D_z(\boldsymbol{z}_i) + \log D_z(1 -E(\boldsymbol{x}_i)) \right]
\end{multline}
where $\{\boldsymbol{z}_1,\dots, \boldsymbol{z}_N\}$ and $\{\boldsymbol{x}_1,\dots,\boldsymbol{x}_N\}$ are batches sampled from the uniform prior $p_z(\boldsymbol{z})= \mathcal{U}[-1,1]^{d_z}$ and the in-class dataset $\mathcal{X}_{in}$, respectively. Under this latent adversarial loss $L_{adv-z}$,
the latent representations $E(\boldsymbol{x})$ of in-class samples $\boldsymbol{x} \in \mathcal{X}_{in}$ are mapped to and thus constrained in the compact latent space $\mathcal{M}= [-1,1]^{d_z}$. 
On the other hand, optimizing the input adversarial loss $L_{adv-x}$
\begin{multline}
L_{adv\text{-}x}(\theta_{D_x},\theta_G) \\
= -\dfrac{1}{N} \sum_{i=1}^N  
  \left[ \log D_x(\boldsymbol{x}_i) 
+ \log D_z(1 - G(\boldsymbol{z}_i)) \right]
\end{multline}
forces every latent vector $\boldsymbol{z} \in \mathcal{M}$ to represent an in-class sample through the decoder $G$; that is, $G(\boldsymbol{z}) \in \mathcal{X}_{in}$ for $\boldsymbol{z} \in \mathcal{M}=[-1,1]^{d_z}$. 

To ensure our encoder output to be collapse-free over the in-class data, we apply \textit{linear activation} rather then bounded activation such as $\tanh$.
Overall, the in-class data is bidirectionally represented by the compact latent space $\mathcal{M}$.

\subsubsection{Exclusive Representations of the In-Class Data}
To perform novelty detection effectively, the given AE must reconstruct out-class instances poorly while maintaining the reconstruction quality of the in-class samples. To this end, we specify our encoder $E$ to be a \textit{deep neural net} (DNN), and claim that the deep encoder enables our DCAE to fulfill the desired objective.

Due to the vulnerability of the deep encoder $E$ to open set risk \cite{open_set}, the encoder does not distinguish between in-class samples $\boldsymbol{x}_{in} \in \mathcal{X}_{in}$ and out-class instance $\boldsymbol{x}_{out} \in \mathcal{X}_{out}$, thereby constraining both $E(\boldsymbol{x}_{in})$ and $E(\boldsymbol{x}_{out})$ into the same compact latent space $\mathcal{M}$. To see this, note that under the latent adversarial loss $L_{adv-z}$ in Eq. \eqref{eq: loss_comp}, the encoder learns to minimize the distance between $E(\boldsymbol{x}_{in})$ and $\mathcal{M}$ 
\begin{equation}
l(\boldsymbol{x}_{in}) = d(E(\boldsymbol{x}_{in}),\mathcal{M}).
\end{equation}
Due to DNN's vulnerability to adversarial attack  and open set risk (as empirically reported in \cite{open_set,deep_fool}), the loss $l(\boldsymbol{x})$ is reduced over out-class instances $\boldsymbol{x}_{out}$ as well. In other words, the open set with small loss $l(\boldsymbol{x})<\epsilon$
\begin{equation}
\{ \boldsymbol{x} \in \mathcal{X}_{out} : l(\boldsymbol{x}) < \epsilon \}
\end{equation}
is significantly large and placed near to the in-class data $\mathcal{X}_{in}$. Therefore, the encoder output $E(\boldsymbol{x}_{out})$ of out-class instances tend to be placed inside the compact latent space $\mathcal{M}$ (as shown in Figure \ref{fig: heat_map}). As every point in the compact latent space $\mathcal{M}$ represents an in-class sample, $E(\boldsymbol{x}_{out})$ then is decoded to an in-class sample, resulting in poor reconstruction of $\boldsymbol{x}_{out}$.

\noindent \textbf{Remark.}
Both OCGAN and our DCAE poorly reconstruct out-class instance by constraining the range of latent representation $E(\boldsymbol{x})$ into a bounded space. In OCGAN, $E(\boldsymbol{x})$ is constrained explicitly by applying a bounded activation $\tanh$, possibly causing deterioration of in-class reconstruction (as depicted in \cite{ocgan}). In our case, we constrain it in a learning-based way, preventing such deterioration. The contrast is shcematically depicted in Figure \ref{fig: comparison}.

\subsubsection{Discriminative Multi-Level Reconstruction of the In-Class Data}

As our model is a reconstruction-based one, the in-class samples must be reconstructed finely. A conventional way to realize this is to minimize either $L_1$ or $L_2$ based distance $\lVert \boldsymbol{x} - \widehat{\boldsymbol{x}} \rVert$ between $\boldsymbol{x}$ and its reconstruction $\widehat{\boldsymbol{x}}:=G(E(\boldsymbol{x}))$. However, as $\boldsymbol{x}$ is high-dimensional in our case, the minimization results in blurry reconstruction, which is ineffective for our purpose.

To obtain robust reconstruction, we propose to exploit the multiple hidden layers $f_l=f_l(\cdot; \widehat{\theta}_{D_x})$ of the internal discriminator $D_x$. In particular, we minimize the distance between multi-level projections of an in-class sample and its reconstruction:
\begin{equation}
L_{eaf}(\theta_E, \theta_G) = 
\dfrac{1}{N}  \sum_{i=1}^N \left[ \sum_{l=0}^L \lVert f_l(\boldsymbol{x}_i) - f_l(\widehat{\boldsymbol{x}_i}) \rVert_1 \right].
\label{eq: eaf}
\end{equation}
Here, $f_0(\boldsymbol{x}):=\boldsymbol{x}$, and $L>0$ is the number of the hidden layers selected in $D_x$.
As a result, the reconstruction $\widehat{\boldsymbol{x}}_i$ preserves the multi-level semantics \cite{zfnet} of $\boldsymbol{x}_i$ captured by the input discriminator, necessitating fine reconstruction of the in-class data.

Moreover, \textit{multi-level reconstruction by $L_{eaf}$ during the training improves the effectiveness of using the penultimate layer $f_L$ for novelty detection in the inference stage} as will be shown in the later subsection (Proposition \ref{prop: final}).

\subsubsection{Surjective Encoding}
To fully prevent the collapse in the latent representations of the in-class data and thus deterioration of their reconstruction, the encoder $E: \mathcal{X}_{in} \to \mathcal{M}$ must be surjective. Otherwise, the encoder output of the in-class data will be constrained in a limited range of $\mathcal{M}$ and thus not fully represent the in-class data. Henceforth, we ensure the surjectivity of $E: \mathcal{X}_{in} \to \mathcal{M}$ by minimizing 
\begin{equation}
L_{inv-z}(\theta_E, \theta_G) = \dfrac{1}{N} \sum_{i=1}^N \lVert \boldsymbol{z}_i - \widehat{\boldsymbol{z}}_i \rVert
\label{eq: inv_z_loss}
\end{equation}
where $\widehat{\boldsymbol{z}} = E(G(\boldsymbol{z}))$ is the reconstruction of $\boldsymbol{z}$. Our proposition below validates that minimizing Eq. \eqref{eq: inv_z_loss} forces $E$ to be surjective.
\begin{prop}
Assume $E(\mathcal{X}_{in}) \subseteq \mathcal{M}$ and $G(\mathcal{M}) \subseteq \mathcal{X}_{in}$. If 
$
\lVert \boldsymbol{z} - \widehat{\boldsymbol{z}} \rVert_1 = 0
$
for every $\boldsymbol{z} \in \mathcal{M}$,
then $E: \mathcal{X}_{in} \to \mathcal{M}$ is surjective.
\end{prop}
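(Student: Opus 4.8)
The plan is to unwind the definitions and observe that the hypothesis $\lVert \boldsymbol{z} - \widehat{\boldsymbol{z}} \rVert_1 = 0$ is, on the nose, the statement that $E \circ G$ is the identity on $\mathcal{M}$, which immediately exhibits a preimage for every target point. First I would recall that $\widehat{\boldsymbol{z}} := E(G(\boldsymbol{z}))$ by definition in Eq.~\eqref{eq: inv_z_loss}, and that the $\ell_1$ norm on $\R^{d_z}$ is a genuine norm, so $\lVert \boldsymbol{z} - \widehat{\boldsymbol{z}} \rVert_1 = 0$ forces $\widehat{\boldsymbol{z}} = \boldsymbol{z}$ as vectors. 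Thus the hypothesis rewrites as $E(G(\boldsymbol{z})) = \boldsymbol{z}$ for every $\boldsymbol{z} \in \mathcal{M}$.

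Next I would verify surjectivity directly from the definition. Fix an arbitrary $\boldsymbol{z} \in \mathcal{M}$; the goal is to produce some $\boldsymbol{x} \in \mathcal{X}_{in}$ with $E(\boldsymbol{x}) = \boldsymbol{z}$. The natural candidate is $\boldsymbol{x} := G(\boldsymbol{z})$: by the assumption $G(\mathcal{M}) \subseteq \mathcal{X}_{in}$ this indeed lies in the domain $\mathcal{X}_{in}$, and by the previous paragraph $E(\boldsymbol{x}) = E(G(\boldsymbol{z})) = \boldsymbol{z}$. Since $\boldsymbol{z} \in \mathcal{M}$ was arbitrary, every element of $\mathcal{M}$ is hit, so $E \colon \mathcal{X}_{in} \to \mathcal{M}$ is surjective. (The hypothesis $E(\mathcal{X}_{in}) \subseteq \mathcal{M}$ is what makes the codomain statement ``$E \colon \mathcal{X}_{in} \to \mathcal{M}$'' well posed; it is not otherwise needed for the surjectivity conclusion.)

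There is essentially no obstacle here: the result is a one-line consequence of the fact that a map admitting a right inverse is surjective, with $G$ playing the role of the right inverse. The only point worth stating carefully is the passage from ``norm zero'' to ``vectors equal,'' and the check that $G(\boldsymbol{z})$ actually belongs to the domain of $E$, both of which are handled by the stated assumptions. If I wanted to be slightly more careful about the idealization in the statement versus the training objective, I would add a remark that in practice $L_{inv\text{-}z}$ is only driven close to zero, so one obtains approximate surjectivity (the image of $E$ is an $\varepsilon$-net of $\mathcal{M}$ in the appropriate sense), but for the exact claim as stated the argument above suffices.
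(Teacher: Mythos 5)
Your proof is correct and follows essentially the same route as the paper's: from $\lVert \boldsymbol{z}-\widehat{\boldsymbol{z}}\rVert_1=0$ one gets $E(G(\boldsymbol{z}))=\boldsymbol{z}$ for every $\boldsymbol{z}\in\mathcal{M}$, and since $G(\boldsymbol{z})\in\mathcal{X}_{in}$ by assumption, $G$ serves as a right inverse and surjectivity follows. You merely make explicit the steps (norm zero implies vector equality, domain membership of $G(\boldsymbol{z})$) that the paper's one-line proof leaves implicit.
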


On the other hand, minimzing $L_{inv-z}$ ensures the reconstruction of the inferred in-class samples $G(\boldsymbol{z})$ (i.e., the in-class samples generated by $G$) on account of the below proposition:
\begin{prop}
For any $\boldsymbol{z} \in \mathcal{M}$, 
$
\lVert G(\boldsymbol{z}) - \widehat{G(\boldsymbol{z})} \rVert_1 \leq \lVert G \rVert_{Lip} \lVert \boldsymbol{z} - \widehat{\boldsymbol{z}} \rVert_1
$
where $\widehat{G(\boldsymbol{z})} := G(E(G(z))) $ is the reconstruction of $G(\boldsymbol{z})$.
\end{prop}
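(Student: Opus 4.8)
The plan is to reduce the claim to a single application of the Lipschitz bound for $G$, after first rewriting the nested reconstruction $\widehat{G(\boldsymbol{z})}$ in a more transparent form. First I would recall the definition introduced just above in Eq.~\eqref{eq: inv_z_loss}: for $\boldsymbol{z} \in \mathcal{M}$ one sets $\widehat{\boldsymbol{z}} = E(G(\boldsymbol{z}))$. Substituting this into $\widehat{G(\boldsymbol{z})} := G(E(G(\boldsymbol{z})))$ immediately yields the identity $\widehat{G(\boldsymbol{z})} = G(\widehat{\boldsymbol{z}})$; that is, reconstructing the generated sample $G(\boldsymbol{z})$ amounts to pushing the reconstructed latent code $\widehat{\boldsymbol{z}}$ through the decoder once more.

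Next I would invoke the Lipschitz continuity of $G$. By definition of the (minimal) Lipschitz constant $\lVert G \rVert_{Lip}$ taken with respect to the $\ell_1$ norm on both the latent space and the data space, for any two points $\boldsymbol{z}, \boldsymbol{z}'$ in the domain of $G$ we have $\lVert G(\boldsymbol{z}) - G(\boldsymbol{z}') \rVert_1 \le \lVert G \rVert_{Lip}\, \lVert \boldsymbol{z} - \boldsymbol{z}' \rVert_1$. Applying this with $\boldsymbol{z}' = \widehat{\boldsymbol{z}}$ and using the identity from the previous step gives
\[
\lVert G(\boldsymbol{z}) - \widehat{G(\boldsymbol{z})} \rVert_1 = \lVert G(\boldsymbol{z}) - G(\widehat{\boldsymbol{z}}) \rVert_1 \le \lVert G \rVert_{Lip}\, \lVert \boldsymbol{z} - \widehat{\boldsymbol{z}} \rVert_1,
\]
which is exactly the asserted inequality.

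The only points needing a word of care, rather than being genuine obstacles, are: (i) that $\widehat{\boldsymbol{z}} = E(G(\boldsymbol{z}))$ lies in the domain on which $G$ is defined and $\lVert G \rVert_{Lip}$ is measured — this holds since $G$ is a decoder network defined on all of $\mathbb{R}^{d_z} \supseteq \mathcal{M}$, and under the hypothesis $E(\mathcal{X}_{in}) \subseteq \mathcal{M}$ inherited from the setting of the preceding proposition one in fact has $\widehat{\boldsymbol{z}} \in \mathcal{M}$ as well; and (ii) that $G$ is genuinely Lipschitz, which is automatic for the finite composition of affine maps and Lipschitz activations constituting $G$, so $\lVert G \rVert_{Lip} < \infty$. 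With these remarks the statement follows in two lines, so I do not anticipate any real difficulty. The substance of the proposition is interpretive: it certifies that the latent-reconstruction error $\lVert \boldsymbol{z} - \widehat{\boldsymbol{z}} \rVert_1$, which $L_{inv\text{-}z}$ drives toward zero, controls the reconstruction error of every inferred in-class sample $G(\boldsymbol{z})$ up to the decoder's Lipschitz constant.
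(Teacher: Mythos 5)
Your proof is correct and follows essentially the same route as the paper's (which simply notes the inequality is immediate from $G$ being a Lipschitz-continuous neural network); you merely spell out the substitution $\widehat{G(\boldsymbol{z})} = G(\widehat{\boldsymbol{z}})$ and the single application of the Lipschitz bound that the paper leaves implicit. No gaps.
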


Thus, if the generator $G$ is poor and the generated samples do not properly mimic real  in-class samples (i.e., $G(\boldsymbol{z}) \notin \mathcal{X}_{in}$), then $L_{inv-z}$ would harm the performance as it forces the AE to learn to reconstruct out-class samples $G(\boldsymbol{z})$. On contrary, if the generator is good, it allows the AE to learn to reconstruct on unseen in-class samples, improving novelty detection performance in test environment. 

Based on these evidences, we regulate the contribution of $L_{inv-z}$ by hyper-parametrizing $\alpha_z$.


\subsubsection{Full Objective}

\begin{figure*}[!t]
\begin{center}
   \includegraphics[width=0.6\linewidth]{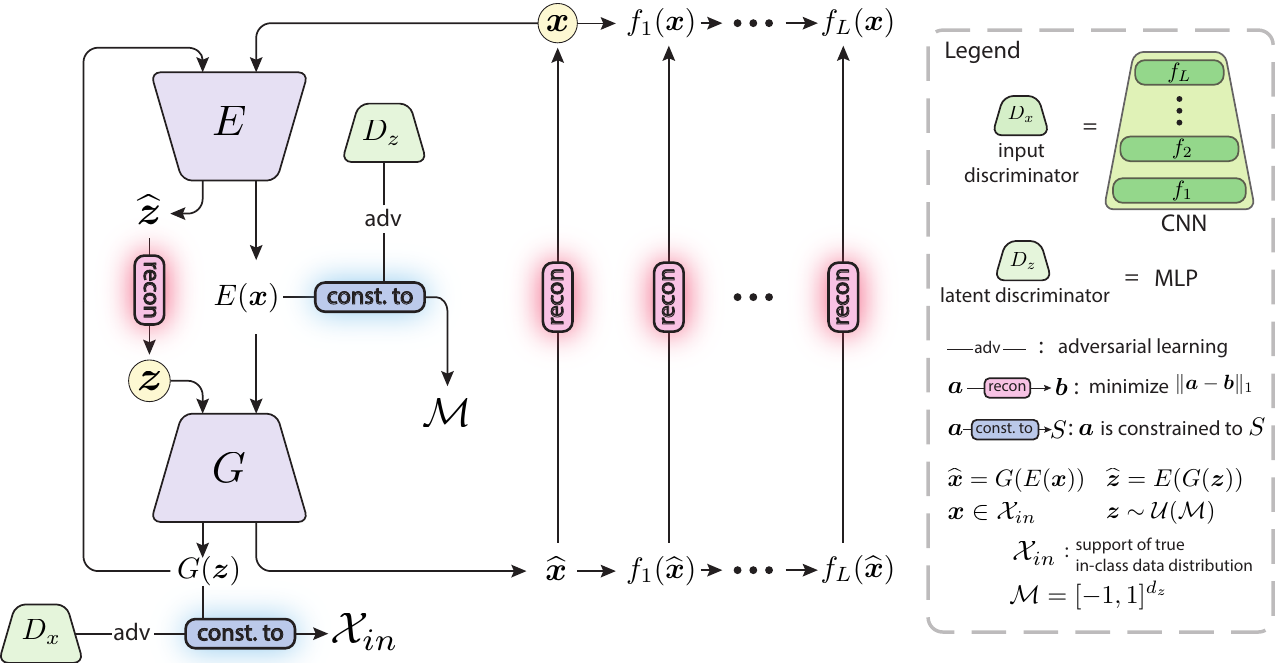}
\end{center}
\vspace{-0.20in}
\caption{A schematic diagram for the training of DCAE.}
\label{fig: loss}
\vspace{-0.2in}
\end{figure*}

The full objective of DCAE is to adversarially optimize 
\begin{equation}
\underset{\theta_E, \theta_G}{\min} \underset{\theta_{D_z}, \theta_{D_x}}{\max}\,  
L_{comp} + L_{eaf} + \alpha_z L_{inv-z} .
\label{eq: full_objective}
\end{equation}
Here, the coefficient $\alpha_z$ controls the contribution of $L_{inv-z}$. Unless specified otherwise, $\alpha_z$ is fixed to be $1$.

The overall architecture of DCAE is depicted in Figure \ref{fig: loss}, and its detailed algorithm is given in Supplementary.

\subsection{Inference}

\begin{figure}[!tb]
\centering
\subfloat[]{\includegraphics[width=0.17\textwidth]{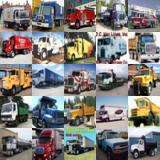}}
\hspace{4mm}
\subfloat[]{\includegraphics[width=0.17\textwidth]{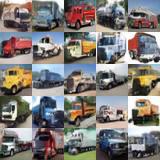}}  \\
\vspace{-2mm}
\subfloat[]{\includegraphics[width=0.17\textwidth]{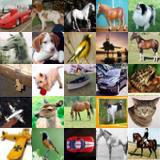}}
\hspace{4mm}
\subfloat[]{\includegraphics[width=0.17\textwidth]{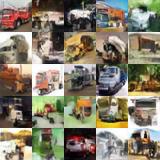}}
\caption{
Here, $\mathcal{X}_{in}$ is the truck class in CIFAR-10, and $\mathcal{X}_{out}$ is the rest of the classes. (a) test in-class $\boldsymbol{x}_{in}$, (b) test in-class reconstructions $\widehat{\boldsymbol{x}}_{in}$, (c) out-class $\boldsymbol{x}_{out}$, (d) out-class reconstructions $\widehat{\boldsymbol{x}}_{out}$.
}
\label{fig: truck}
\vspace{-0.20in}
\end{figure}

\begin{figure}[!tb]
\centering
\subfloat[]{\includegraphics[width=0.15\textwidth]{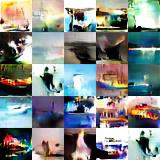}}
\hspace{1mm}
\subfloat[]{\includegraphics[width=0.15\textwidth]{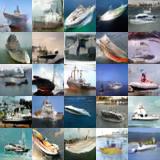}}  
\hspace{1mm}
\subfloat[]{\includegraphics[width=0.15\textwidth]{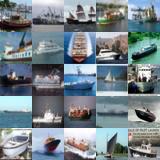}}
\vspace{-0.09in}
\caption{
Here, $\mathcal{X}_{in}$ is the ship class in CIFAR-10. (a) incorrectly generated samples $G(\boldsymbol{z}_{out})$, (b) generated samples $G(\boldsymbol{z})$, (c) real in-class $\boldsymbol{x}_{in}$. The figures show that $G(\boldsymbol{z}_{out})$ exist (i.e., can be sampled), represent out-class, but not too distant from $\mathcal{X}_{in}$.
}
\label{fig: ship}
\vspace{-0.25in}
\end{figure}

In the inference stage, a given query $\boldsymbol{x}$ is classified out-class if its nolvety score $s(\boldsymbol{x})$ exceeds a threshold $\tau$ and otherwise in-class. For reconstruction-based method, $s(\boldsymbol{x})$ is defined based on sample-wise reconstruction error.

As shown in Figure \ref{fig: truck}, the reconstruction of DCAE preserves class semantics for in-class samples but not for out-class instances. On account of this evidence, a distance metric capturing such a semantic error needs to be built to sharply classify a given query.

We claim that the penultimate layer $f_L$ of $D_x$ can be an effective building block. To see this, note that on the projected space $\{f_L(x) : x \in \mathcal{X}= \mathcal{X}_{in} \cup \mathcal{X}_{out} \}$, the in-class data $\mathcal{X}_{in}$ is linearly separated from incorrectly generated samples:
\begin{prop}
$f_L(\mathcal{X}_{in})$ is linearly separated from the projections $f_L(G(\boldsymbol{z}_{out}))$ of incorrectly generated samples $G(\boldsymbol{z}_{out})$ with $\boldsymbol{z}_{out} \in \mathbb{R}^{d_z} \setminus \mathcal{M}$ and $D_x(G(\boldsymbol{z}_{out})) < \underset{\boldsymbol{x}_{in} \in \mathcal{X}_{in}}{\min} D_x(\boldsymbol{x}_{in})$
\end{prop}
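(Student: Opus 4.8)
The plan is to reduce the statement to the bare definition of linear separability by reading off the structure of the adversarial discriminator $D_x$. Since $f_L$ is the \emph{penultimate} layer of $D_x$, the scalar output of $D_x$ is obtained from $f_L$ by exactly one further affine map followed by the output nonlinearity $\sigma$ (a sigmoid, hence strictly increasing): writing $\boldsymbol{w}$ and $b$ for the weight and bias of that last layer, $D_x(\boldsymbol{x}) = \sigma\!\big(\langle\boldsymbol{w}, f_L(\boldsymbol{x})\rangle + b\big)$ for every input $\boldsymbol{x}$. So $\boldsymbol{w}$ is the natural candidate normal of the separating hyperplane, and the whole argument amounts to turning the discriminator-score hypothesis into a statement about the linear functional $\langle\boldsymbol{w},\cdot\rangle$ on the feature set $\{f_L(\boldsymbol{x}):\boldsymbol{x}\in\mathcal{X}\}$.

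The steps I would carry out are as follows. First, because $\sigma$ is strictly increasing, for any inputs $\boldsymbol{u},\boldsymbol{v}$ one has $D_x(\boldsymbol{u})<D_x(\boldsymbol{v})$ iff $\langle\boldsymbol{w}, f_L(\boldsymbol{u})\rangle<\langle\boldsymbol{w}, f_L(\boldsymbol{v})\rangle$; hence the hypothesis $D_x(G(\boldsymbol{z}_{out})) < \min_{\boldsymbol{x}_{in}\in\mathcal{X}_{in}} D_x(\boldsymbol{x}_{in})$ is equivalent to the purely linear inequality
\[
\langle\boldsymbol{w}, f_L(G(\boldsymbol{z}_{out}))\rangle \;<\; t_0, \qquad t_0 := \min_{\boldsymbol{x}_{in}\in\mathcal{X}_{in}}\langle\boldsymbol{w}, f_L(\boldsymbol{x}_{in})\rangle ,
\]
the minimum being attained since $\mathcal{X}_{in}$ is a finite dataset. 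Second, by definition of $t_0$ every $\boldsymbol{y}\in f_L(\mathcal{X}_{in})$ satisfies $\langle\boldsymbol{w},\boldsymbol{y}\rangle\ge t_0$, whereas by the previous line every $\boldsymbol{y}$ in the set $\{f_L(G(\boldsymbol{z}_{out})) : \boldsymbol{z}_{out}\notin\mathcal{M},\ D_x(G(\boldsymbol{z}_{out})) < \min_{\boldsymbol{x}_{in}} D_x(\boldsymbol{x}_{in})\}$ of projections of incorrectly generated samples satisfies $\langle\boldsymbol{w},\boldsymbol{y}\rangle< t_0$. Third, therefore the affine hyperplane $H=\{\boldsymbol{y}:\langle\boldsymbol{w},\boldsymbol{y}\rangle = t_0\}$ has $f_L(\mathcal{X}_{in})$ on its closed side $\langle\boldsymbol{w},\cdot\rangle\ge t_0$ and all the $f_L(G(\boldsymbol{z}_{out}))$ strictly on the open side $\langle\boldsymbol{w},\cdot\rangle< t_0$, which is the claimed linear separation; if \emph{strict} separation is wanted one replaces $t_0$ by any $t\in(\sup_{\boldsymbol{z}_{out}}\langle\boldsymbol{w}, f_L(G(\boldsymbol{z}_{out}))\rangle,\, t_0)$ whenever that supremum is $<t_0$, e.g. when only finitely many generated samples are considered, and uses $\{\boldsymbol{y}:\langle\boldsymbol{w},\boldsymbol{y}\rangle = t\}$.

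I do not expect a genuine obstacle: the content is simply that the discriminator's scalar output is a strictly monotone function of a single linear functional of $f_L$, so its ordering on inputs is literally a linear ordering in $f_L$-space, and the hypothesis is exactly a separation of values of that functional. The only points needing a little care are (i) the architectural assumption that the head on top of $f_L$ is affine plus a monotone output nonlinearity — which holds for the GAN discriminator used in DCAE — and (ii) the bookkeeping of strict versus non-strict inequality at $t_0$, handled by the small shift above; note also that the first hypothesis $\boldsymbol{z}_{out}\notin\mathcal{M}$ is not used in the separation argument itself but only motivates which $\boldsymbol{z}$ can produce such poorly generated, confidently rejected samples.
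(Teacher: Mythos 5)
Your proof is correct and follows essentially the same route as the paper: both write $D_x(\boldsymbol{x}) = \sigma(\boldsymbol{w}^T f_L(\boldsymbol{x}) + b)$ and use the monotonicity of $\sigma$ to convert the discriminator-score hypothesis into a linear inequality in $f_L$-space, yielding the separating hyperplane with normal $\boldsymbol{w}$. Your version merely spells out the threshold $t_0$ and the strict-versus-non-strict bookkeeping that the paper's one-line proof leaves implicit.
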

Observed in Figure \ref{fig: ship}, $G(\boldsymbol{z}_{out})$ constitutes a valid part of out-class space. The projection $f_L$ thus captures the class semantics of the in-class data accordingly.
Motivated by this, we define a novelty score $s_c$ based on the $L_1$-based error
\begin{equation}
s_c(\boldsymbol{x}) = \lVert f_L(\boldsymbol{x}) - f_L(\widehat{\boldsymbol{x}}) \rVert_1
\end{equation}
between $\boldsymbol{x}$ and $\widehat{\boldsymbol{x}}$ under the projection $f_L$.
In our ablation study, we empirically validate the effectiveness of $f_L$ compared to usage of early layers $f_l$ (Figure \ref{fig: abl_plots}(b)).

\subsubsection{Multi-level Reconstruction Improves Discriminativeness of $f_L$}
As in-class samples should have low novelty score $s_c(\boldsymbol{x})$, achieving minimal values of $s_c(\boldsymbol{x})$ on in-class samples $\boldsymbol{x}$ is crucial. Based on the following proposition, 
reconstructing through a more number of layers $f_l$ during the training of DCAE incites the model to find a smaller local minimum of $s_c(\boldsymbol{x}) =  \lVert f_L(\boldsymbol{x}) - f_L(\widehat{\boldsymbol{x}}) \rVert$ on the in-class samples $\boldsymbol{x}$.
\begin{prop}
\label{prop: final}
The reconstruction error over the final layer $f_L$ is tightly bounded by that of every previous layer:
\begin{equation}
\lVert f_L(\boldsymbol{x}) - f_L(\widehat{\boldsymbol{x}}) \rVert \leq C_l \lVert f_l(\boldsymbol{x}) - f_l(\widehat{\boldsymbol{x}}) \rVert\, \quad \forall \, l, \,x
\end{equation}
for some $C_l>0$.
\end{prop}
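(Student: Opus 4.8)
The plan is to exploit the feed-forward structure of the input discriminator $D_x$. Since $f_0,f_1,\dots,f_L$ are the successive hidden-layer representations produced by a single feed-forward network, for every $l\le L$ the deeper feature $f_L$ \emph{factors through} the shallower one: there is a map $g_{l\to L}$, namely the composition of the layers of $D_x$ sitting strictly between depth $l$ and depth $L$, such that $f_L = g_{l\to L}\circ f_l$. First I would make this factorization explicit, writing $g_{l\to L}=\phi_L\circ\phi_{L-1}\circ\cdots\circ\phi_{l+1}$, where each $\phi_j$ is a single layer, i.e.\ an affine map $u\mapsto W_j u + b_j$ composed with a pointwise nonlinearity (and, at inference, any normalization layer with frozen statistics, which is again affine). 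For $l=L$ this is the empty composition, the identity.

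Next I would show that $g_{l\to L}$ is Lipschitz. Each affine map $u\mapsto W_ju+b_j$ is Lipschitz with constant $\lVert W_j\rVert$ (the operator norm with respect to whichever norm $\lVert\cdot\rVert$ is in use), and the activations used in a GAN discriminator — ReLU, LeakyReLU (slope $\le 1$), $\tanh$, sigmoid — are all $1$-Lipschitz. Hence each $\phi_j$ is Lipschitz, and a composition of Lipschitz maps is Lipschitz with constant at most the product of the factors; set
\[
C_l \;:=\; \lVert g_{l\to L}\rVert_{Lip}\;\le\;\prod_{j=l+1}^{L}\lVert \phi_j\rVert_{Lip},\qquad C_L:=1 .
\]
Any positive number dominating this product serves as the desired $C_l>0$.

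The inequality is then immediate: for any query $\boldsymbol{x}$ and its reconstruction $\widehat{\boldsymbol{x}}=G(E(\boldsymbol{x}))$,
\[
\lVert f_L(\boldsymbol{x}) - f_L(\widehat{\boldsymbol{x}})\rVert
= \lVert g_{l\to L}(f_l(\boldsymbol{x})) - g_{l\to L}(f_l(\widehat{\boldsymbol{x}}))\rVert
\le C_l\,\lVert f_l(\boldsymbol{x}) - f_l(\widehat{\boldsymbol{x}})\rVert ,
\]
which holds for every $l$ and every $\boldsymbol{x}$. The $l=0$ case (recall $f_0(\boldsymbol{x})=\boldsymbol{x}$) gives in particular $\lVert f_L(\boldsymbol{x}) - f_L(\widehat{\boldsymbol{x}})\rVert \le C_0\,\lVert \boldsymbol{x}-\widehat{\boldsymbol{x}}\rVert$, and more usefully every intermediate term in $L_{eaf}$ of Eq.~\eqref{eq: eaf} upper-bounds, up to the constant $C_l$, the inference score $s_c(\boldsymbol{x}) = \lVert f_L(\boldsymbol{x}) - f_L(\widehat{\boldsymbol{x}})\rVert_1$; hence driving down the multi-level training loss drives down $s_c$ on in-class samples, which is the effect claimed in the text.

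\textbf{Main obstacle.} The only delicate point is ensuring that every building block of $D_x$ has a \emph{finite, input-independent} Lipschitz constant, so that $C_l$ is well defined. For fixed network weights this holds for all the standard layers (convolutions, fully-connected layers, pooling, frozen batch/instance normalization, ReLU/LeakyReLU), so the argument goes through once one records this as a mild assumption on the architecture; no estimate is tight or architecture-specific beyond that.
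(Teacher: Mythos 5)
Your proposal is correct and follows essentially the same route as the paper: the paper's proof also invokes the Lipschitz continuity of the map carrying $f_{l}$ (or $f_{l-1}$) to the next layer and cascades the per-layer inequality up to $f_L$, which is exactly your factorization $f_L = g_{l\to L}\circ f_l$ with $C_l$ bounded by the product of layer-wise Lipschitz constants. The only cosmetic difference is that the paper states the bound layer-by-layer and chains it, while you compose the layers first; neither version does more than you do to substantiate the word ``tightly'' in the statement.
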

Overall, multi-level reconstruction by $L_{eaf}$ during training improves the inference capability of novelty score $s_c$ defined by $f_L$. The hypothesis is experimentally verified in Figure \ref{fig: abl_plots}(a).

\subsubsection{Centered Co-activation Novelty Score}
The score $s_c$ based on the $L_1$-based error might be insufficient to capture the class-relation between $\boldsymbol{x}$ and $\widehat{\boldsymbol{x}}$ since it is simply an additive ensemble of element-wise errors. In biometric models\cite{cos_sim_face}, angular distance is known to  well capture such relation. Motivated by this, we propose centered co-activation novelty score
\begin{equation}
s_{a}(\boldsymbol{x}) = 1 - a( \, f_L(\boldsymbol{x}) - m(\boldsymbol{x}) \, , \, f_L(\widehat{\boldsymbol{x}}) - \widehat{m}(\widehat{\boldsymbol{x}}) \,)
\label{eq: co_actv}
\end{equation}
where $a(\boldsymbol{x},\boldsymbol{y}) := \frac{\boldsymbol{x}^T \boldsymbol{y} }{\lVert \boldsymbol{x} \rVert_2 \lVert \boldsymbol{y} \rVert_2 } $ is the cosine similarity, and $m(\boldsymbol{x})$ is the element-wise mean of $f_L(\boldsymbol{x}) \in \mathbb{R}^{d_{f_L}}$, i.e., $m(\boldsymbol{x}) = \frac{1}{d_{f_L}} \sum_{k=1}^{d_{f_L}} f_L(\boldsymbol{x})_{k}$. 
$\widehat{m}(\widehat{\boldsymbol{x}})$ is similarly defined for $f_L(\widehat{\boldsymbol{x}})$.
The proposed score is increased if the activations in the centered feature $f_L(\boldsymbol{x}) - m(\boldsymbol{x})$ co-activate with those in the corresponding parts of $f_L(\widehat{\boldsymbol{x}}) - \widehat{m}(\widehat{\boldsymbol{x}})$. For this score to be low, not only $a(f_L(\boldsymbol{x}), f_L(\widehat{\boldsymbol{x}}))$ needs to be high but also $ \lvert m(\boldsymbol{x}) - \widehat{m}(\widehat{\boldsymbol{x}}) \rvert$ needs to be small. The latter term $\lvert m(\boldsymbol{x}) - \widehat{m}(\widehat{\boldsymbol{x}}) \rvert$ is governed by the $L_1$-based error:
\begin{prop}
For any $\boldsymbol{x}$, 
$
\lvert m(\boldsymbol{x}) - \widehat{m}(\widehat{\boldsymbol{x}}) \rvert \leq d_{f_L} s_c(\boldsymbol{x}).
$
\end{prop}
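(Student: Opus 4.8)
The plan is to prove this by straightforward unwinding of the definitions followed by the triangle inequality; no deep ingredient is needed, and in fact the argument delivers a sharper constant than the one stated.

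First I would rewrite the difference of the two element-wise means over the common denominator $d_{f_L}$, using $m(\boldsymbol{x}) = \tfrac{1}{d_{f_L}}\sum_{k=1}^{d_{f_L}} f_L(\boldsymbol{x})_k$ and $\widehat{m}(\widehat{\boldsymbol{x}}) = \tfrac{1}{d_{f_L}}\sum_{k=1}^{d_{f_L}} f_L(\widehat{\boldsymbol{x}})_k$:
\[
m(\boldsymbol{x}) - \widehat{m}(\widehat{\boldsymbol{x}}) = \frac{1}{d_{f_L}} \sum_{k=1}^{d_{f_L}} \bigl( f_L(\boldsymbol{x})_k - f_L(\widehat{\boldsymbol{x}})_k \bigr).
\]
Taking absolute values and applying the triangle inequality termwise gives
\[
\bigl| m(\boldsymbol{x}) - \widehat{m}(\widehat{\boldsymbol{x}}) \bigr| \le \frac{1}{d_{f_L}} \sum_{k=1}^{d_{f_L}} \bigl| f_L(\boldsymbol{x})_k - f_L(\widehat{\boldsymbol{x}})_k \bigr| = \frac{1}{d_{f_L}}\, \bigl\| f_L(\boldsymbol{x}) - f_L(\widehat{\boldsymbol{x}}) \bigr\|_1 = \frac{1}{d_{f_L}}\, s_c(\boldsymbol{x}),
\]
where the middle equality is just the definition of the $L_1$ norm and the last equality is the definition of $s_c$.

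Finally, since the penultimate-layer feature dimension satisfies $d_{f_L} \ge 1$, we have $\tfrac{1}{d_{f_L}} \le d_{f_L}$, so $\bigl| m(\boldsymbol{x}) - \widehat{m}(\widehat{\boldsymbol{x}}) \bigr| \le \tfrac{1}{d_{f_L}} s_c(\boldsymbol{x}) \le d_{f_L}\, s_c(\boldsymbol{x})$, which is the claim. There is essentially no obstacle here: the only thing to watch is carrying the $1/d_{f_L}$ normalization from the definition of the mean through the estimate correctly, and observing that relaxing the constant $1/d_{f_L}$ to $d_{f_L}$ only weakens the bound. I would remark in passing that the stronger inequality with constant $1/d_{f_L}$ in fact holds, but the looser form stated in the proposition is all that is needed to conclude that driving down the $L_1$ reconstruction error $s_c(\boldsymbol{x})$ also drives the mean-discrepancy term $\lvert m(\boldsymbol{x}) - \widehat{m}(\widehat{\boldsymbol{x}}) \rvert$ in the centered co-activation score toward zero.
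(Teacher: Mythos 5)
Your proof is correct and follows essentially the same route as the paper's own argument: rewrite the difference of means over the common denominator, apply the triangle inequality, and identify the resulting sum with the $L_1$ norm defining $s_c(\boldsymbol{x})$. In fact the paper's proof also derives the sharper bound with constant $\tfrac{1}{d_{f_L}}$ (suggesting the factor $d_{f_L}$ in the proposition statement is simply a loose or mistyped constant), and your explicit final step relaxing $\tfrac{1}{d_{f_L}}$ to $d_{f_L}$ makes the link to the stated inequality cleaner than the paper's.
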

Thus, if a query $\boldsymbol{x}$ has a small $L_1$-based reconstruction error $s_c(\boldsymbol{x})$, it is reflected in the score $s_a(\boldsymbol{x})$. Overall, $s_a$ captures both the cosine and $L_1$-distance based similarities between the input and its reconstruction.

\section{Experiments}

In this section, we assess the effectivness of the proposed model DCAE. The set of experiments we conduct can be divided into three parts:
\begin{enumerate}
\item[(1)] Novelty detection performance of DCAE is evaluated on well-known benchmark data sets: MNIST \cite{dataset_mnist}, F-MNIST \cite{dataset_fmnist}, and CIFAR-10 \cite{dataset_cifar10},
\item[(2)] DCAE is applied to detect adversarial examples, tested upon GTSRB stop sign dataset \cite{dataset_gtsrb_stop},
\item[(3)] Ablation study is conducted to analyze the contribution of each component in DCAE.
\end{enumerate}

We remark that our problem is one-class unsupervised novelty detection. Thus, we do not compare with novelty detectors trained in other settings, for example, semi-supervised \cite{semi_anomaly,out_expose} and self-supervised \cite{rotnet,rotnet_robust} novelty detectors, which generally outperform unsupervised novelty detectors.

\subsection{Novelty Detection}

\begin{table}[!b]
\caption{Comparison of novelty detection performance on MNIST using Protocol B.}
\label{table: mnist}
\vspace{-0.20in}
\begin{center}
\begin{tabular}{l  l}
\specialrule{1.5pt}{1pt}{1pt}
method & AUC \\
\cmidrule{1-2}
OC-SVM \cite{ocsvm}  & 0.9513 \\
KDE \cite{bishop_pr} & 0.8143 \\
DAE \cite{low_density_rejection}  & 0.8766 \\
VAE \cite{vae} & 0.9696 \\
D-SVDD \cite{deep_svdd} & 0.9480 \\
LSA \cite{lsa} (CVPR'19) & \textbf{0.9750} \\
OCGAN \cite{ocgan} (CVPR'19) &\textbf{0.9750}\\
\cmidrule{1-2} 
\textbf{DCAE w/ $s_c$} ($\alpha_z=1.0$) & 0.9669 \\
\textbf{DCAE w/ $s_c$} ($\alpha_z=0.001$) & \textbf{0.9720} \\
\textbf{DCAE w/ $s_a$} ($\alpha_z=0.001$) & \textbf{0.9752} \\
\specialrule{1.5pt}{1pt}{1pt}
\end{tabular}
\end{center}
\vspace{-0.20in}
\end{table}

\begin{table}[t]
\caption{Comparison of novelty detection performance on F-MNIST using Protocol A.}
\label{table: fmnist}
\vspace{-0.20in}
\begin{center}
\begin{tabular}{l @{\hskip 0.3in} l}
\specialrule{1.5pt}{1pt}{1pt}
method & AUC \\
\cmidrule{1-2}
ALOCC DR \cite{adv_learned_classifier} & 0.753 \\
ALOCC D \cite{adv_learned_classifier} & 0.601 \\
DCAE \cite{ae_tokyo} & 0.908 \\
GPND \cite{gpnd} & 0.901 \\
OCGAN \cite{ocgan} & 0.924 \\
\cmidrule{1-2} 
\textbf{DCAE w/ $s_c$}  & \textbf{0.929} \\
\textbf{DCAE w/ $s_a$}  & \textbf{0.932} \\
\specialrule{1.5pt}{1pt}{1pt}
\end{tabular}
\end{center}
\vspace{-0.20in}
\end{table}

\begin{table*}[t]
\caption{
Comparision of novelty detection performance on CIFAR-10 using Protocol B.
}
\label{table: cifar10}
\vspace{-0.09in}
\begin{center}
\begin{tabular}{c c c c c c c c c c c  }
\specialrule{1.5pt}{1pt}{1pt}
& \shortstack{OC-SVM\\ \cite{ocsvm}} & \shortstack{KDE\\ \cite{bishop_pr}}& \shortstack{DAE \\ \cite{lecun_dim_reduct}} & \shortstack{VAE\\ \cite{vae}} & \shortstack{AnoGAN\\ \cite{anogan}} & \shortstack{D-SVDD\\ \cite{deep_svdd}} & \shortstack{LSA\\ \cite{lsa}} & \shortstack{OCGAN\\ \cite{ocgan}}  &  \shortstack{\textbf{DCAE} \\ w/ $s_c$} &  \shortstack{\textbf{DCAE} \\ w/ $s_a$}\\
\cmidrule{2-11}
plane &
0.630 & 0.658 &0.718 & 0.700 & 0.708  & 0.617 & 0.735 & 0.757 &  0.733 & \textbf{0.787}\\
car &
0.440 & 0.520 &0.401 & 0.386 & 0.458 & 0.659 & 0.580 & 0.531 &  0.657 & \textbf{0.737}\\
bird &
0.649 & 0.657 &0.685 & 0.679 & 0.644 & 0.508 & 0.690 & 0.640 &  0.708 & \textbf{0.734}\\
cat &
0.487 & 0.497 &0.556 & 0.535 & 0.510 & 0.591 & 0.542 & 0.620 &  \textbf{0.645} & 0.619\\
deer &
0.735 & 0.727 &0.740 & 0.748 & 0.722 & 0.609 & 0.761 & 0.723 &  \textbf{0.779}& 0.701\\
dog &
0.500 & 0.496 &0.547 & 0.523 & 0.505 & 0.657 & 0.546 & 0.620 &  0.680 & \textbf{0.722}\\
frog &
0.725 & 0.758 &0.642 & 0.687 & 0.707 & 0.677 & 0.751 & 0.723 & 0.794 & \textbf{0.800}\\
horse &
0.533 & 0.564 &0.497 & 0.493 & 0.471 & 0.673 & 0.535 & 0.575 &  0.714 & \textbf{0.773}\\
ship &
0.649 & 0.680 &0.724 & 0.696 & 0.713 & 0.759 & 0.717 & 0.820 &  0.786 & \textbf{0.823}\\
truck &
0.509 & 0.540 &0.389 & 0.386 & 0.458 & \textbf{0.731} & 0.548  & 0.554 &  0.6215 & 0.717\\
\cmidrule{2-11}
\textbf{mean} &
0.5856 & 0.6097 &0.590 & 0.5833 & 0.5916 & 0.6481 & 0.6410 & 0.6566 &  \textbf{0.7117} & \textbf{0.7412}\\
\specialrule{1.5pt}{1pt}{1pt}
\end{tabular}
\end{center}
\vspace{-0.22in}
\end{table*}

\subsubsection{Evaluation Protocol}
To assess the effectiveness of the proposed method, we test it on three well-known multi-class object recognition datasets. Following \cite{ocgan,lsa}, we conduct our experiment in a one-class setting by regarding each class at a time as the known class (in-class). The network of the model is trained using only the known class samples. In the inference stage, the other remaining classes are used as out-class samples. Based on previous works tested upon the same one-class setting, we compare our method by assessing its performance using 
Area Under the Curve (AUC) of Receiver Operating Characteristics curve. To this end, we follow two protocols widely used in the literature \cite{deep_svdd,gpnd,ocgan,lsa} of novelty detection:

\noindent \textbf{Protocol A}: Given in-class and out-class sets, 80\% of the in-class samples are used for training. The remaining 20\% is reserved for testing. The out-class samples for testing are randomly collected from the out-class set so that its total number be equal to that of the in-class test samples.

\noindent \textbf{Protocol B}: We follow the training-testing splits provided from the dataset. For training, all samples in the known class in the training set are employed. For testing, all samples in the test set are used by regarding any other class as an out-class.

%
%

\subsubsection{Results} \label{sec_results_nd}

Here, we present our results together with a brief description of the hyperparameters we used. The detailed architecture setting is given in Supplementary.

\noindent\textbf{MNIST}.
For the MNIST dataset, we tested our model upon Protocol B.  We have found that the generator generalizes too well to learn samples in the extreme (i.e., the samples near the boundary) of the in-class manifold. For this reason, we reduced the coefficient $\alpha_z$  of $L_{inv-z}$ in \eqref{eq: inv_z_loss} to $\alpha_z = 0.001$, which is known to disentangle the latent code \cite{infogan}. Our result is shown in Table \ref{table: mnist}, showing that the perforamnce is comparable to the state-of-the-art model OCGAN and LSA. Note that OCGAN needs an extra classifier module plus with dual GANs and careful sampling technique to achieve the reported performance while LSA is heavy with autoregressive density estimation and requires sensitive architectural configuration. On the other hand, our DCAE consists of dual GANs only, simply trained with a simple end-to-end loss, and inferencing by its own internal module.

\noindent\textbf{F-MNIST}.
The model performance on F-MNIST is assessed using Protocol A. Based on the MNIST experiment, we set $\alpha_z = 0.001$. 
The F-MNIST dataset is not fairly easy as there is a fair amount of intra-class variation while between some classes, the inter-class dissimilarity is not so significant (for example, 'T-shirt' and 'Pullover' classes). Our result is shown in Table \ref{table: fmnist}, showing that it outperforms the state-of-the-art OCGAN.

\noindent \textbf{CIFAR-10}
is a difficult dataset for one-class unsupervised novelty detection. Several reasons include that the dataset is fairly sparse (i.e., samples are not continuous), that it has high intra-class variation (i.e., diverse samples), and that the images are of low-resolution while they contain real objects. Table \ref{table: cifar10} shows that our model outperforms the state-of-the-art OCGAN by a large margin as tested upon Protocol B.

\subsection{Detection of Adversarial Example}

\begin{table}[t]
\caption{
The performance comparison over the task of detecting adversarial examples generated by Boundary Attack.
}
\label{table: adv_attack}
\vspace{-0.10in}
\begin{center}
\begin{tabular}{l l }
\specialrule{1.5pt}{1pt}{1pt}
method & AUC \\
\cmidrule{1-2}
OC-SVM/SVDD \cite{ocsvm} & 0.675 \\
KDE \cite{bishop_pr} & 0.605 \\
IF \cite{isolation_forest} & 0.738 \\
AnoGAN \cite{anogan} & - \\
DAE \cite{ae_tokyo} & 0.791 \\
D-SVDD \cite{deep_svdd} (ICML'18)  & 0.803 \\
\cmidrule{1-2} 
\textbf{DCAE w/ $s_a$} & \textbf{0.877} \\
\textbf{DCAE w/ $s_c$} & \textbf{0.929} \\
\specialrule{1.5pt}{1pt}{1pt}
\end{tabular}
\end{center}
\vspace{-0.20in}
\end{table}

\begin{figure}[!t]
\begin{center}
   \includegraphics[width=0.7\linewidth]{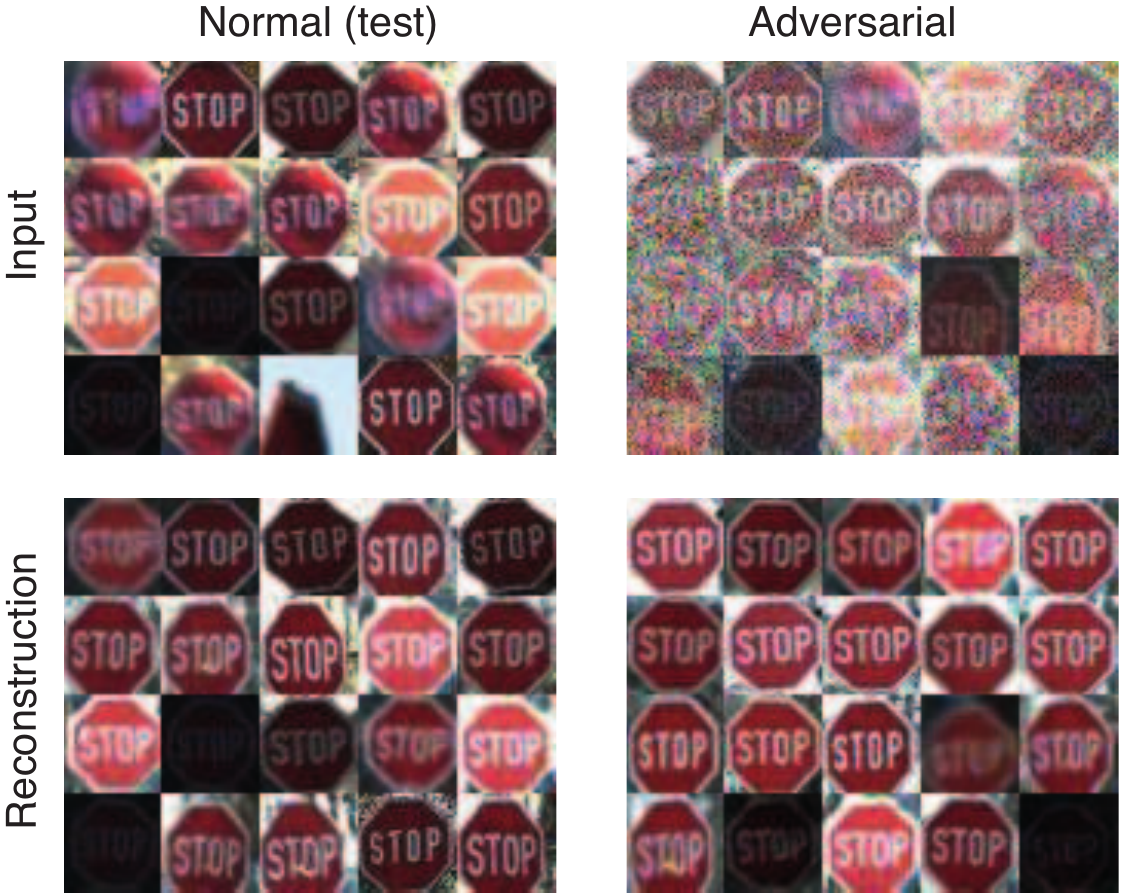}
\end{center}
\vspace{-0.20in}
\caption{Showing reconstructed images of test normal samples and adversarial examples.}
\label{fig: adv}
\vspace{-0.20in}
\end{figure}

In many practical scenarios such as security systems and autonomous driving, it is vital to detect adversarial attacks \cite{practical_attack}. In this experiment, we test our model DCAE on the task of adversarial example detection. Following the protocol proposed by \cite{deep_svdd}, we use the `stop sign' class of German Traffic Sign Recognition Benchmark (GTSRB) dataset \cite{dataset_gtsrb_stop}. The training set consists of $780$ stop sign images of spatial size $32 \times 32$. The test set is composed of $270$ stop sign images and $20$ adversarial examples, which are generated by applying Boundary Attack \cite{boundary_attack} on randomly drawn test stop sign images.

To measure the performance of our method over the task of adversarial example detection, we measured AUC over the test dataset. The model is trained solely using the training set as its in-class set. As shown in Table \ref{table: adv_attack}, our model performs effectively over this task, outperforming all baselines.

To qualitatively assess our model, we visualized the reconstructed images of the test samples. Figure \ref{fig: adv} shows that our model denoises adversarial examples as it reconstructs, resulting in poor reconstruction of them.
As to normal samples,
DCAE reconstructs them finely except noisy samples in the train data. 
It validates the effectivenss of DCAE on detecting adversarial examples.

\subsection{Ablation Study}

\begin{table}
\caption{Ablation study of the model components over CIFAR-10.}
\label{table: ablation}
\vspace{-0.10in}
\begin{center}
\begin{tabular}{l  l  l  l}
\specialrule{1.5pt}{1pt}{1pt}
model & $s_{per-pixel}$ & $s_c$ & $s_a$ 
\\
\cmidrule{1-4}
DCAE w/o $L_{eaf}$ + $L_{inv-z}$ & 0.6321 & 0.6854 & 0.7080 \\
DCAE w/o $L_{inv-z}$ & 0.6472 & 0.6926 & 0.7274 \\
\cmidrule{1-4} 
DCAE & 0.6347 & 0.7117  & 0.7412 \\
\cmidrule{1-4} 
tanh-DCAE w/o $L_{eaf} + L_{inv-z}$ & 0.6323 & 0.6617  & 0.6796 \\
\specialrule{1.5pt}{1pt}{1pt}
\end{tabular}
\end{center}
\vspace{-0.20in}
\end{table}

\begin{figure}[!tb]
\centering
\subfloat[]{\includegraphics[width=0.24\textwidth]{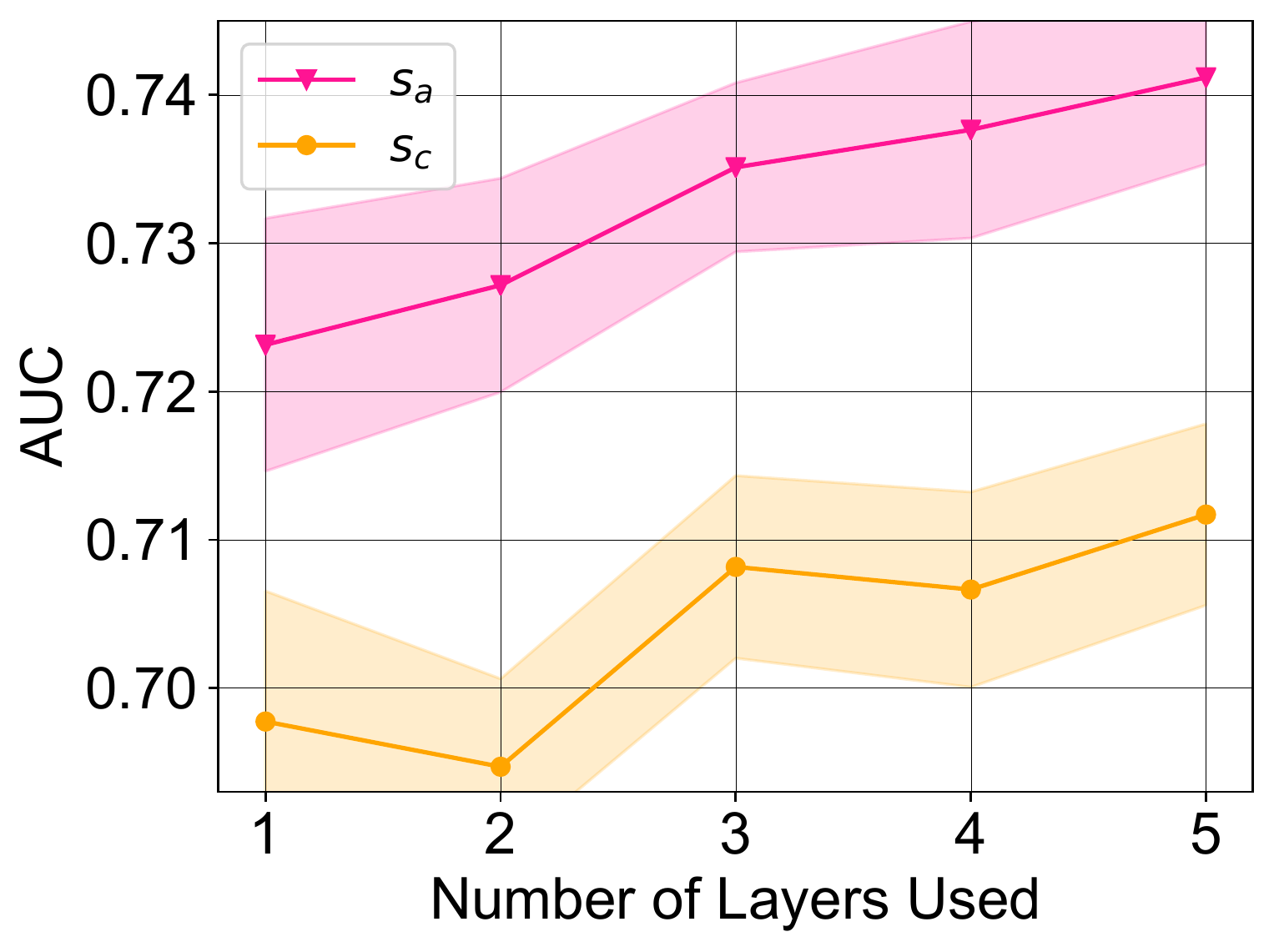}}
\hspace{0mm}
\subfloat[]{\includegraphics[width=0.24\textwidth]{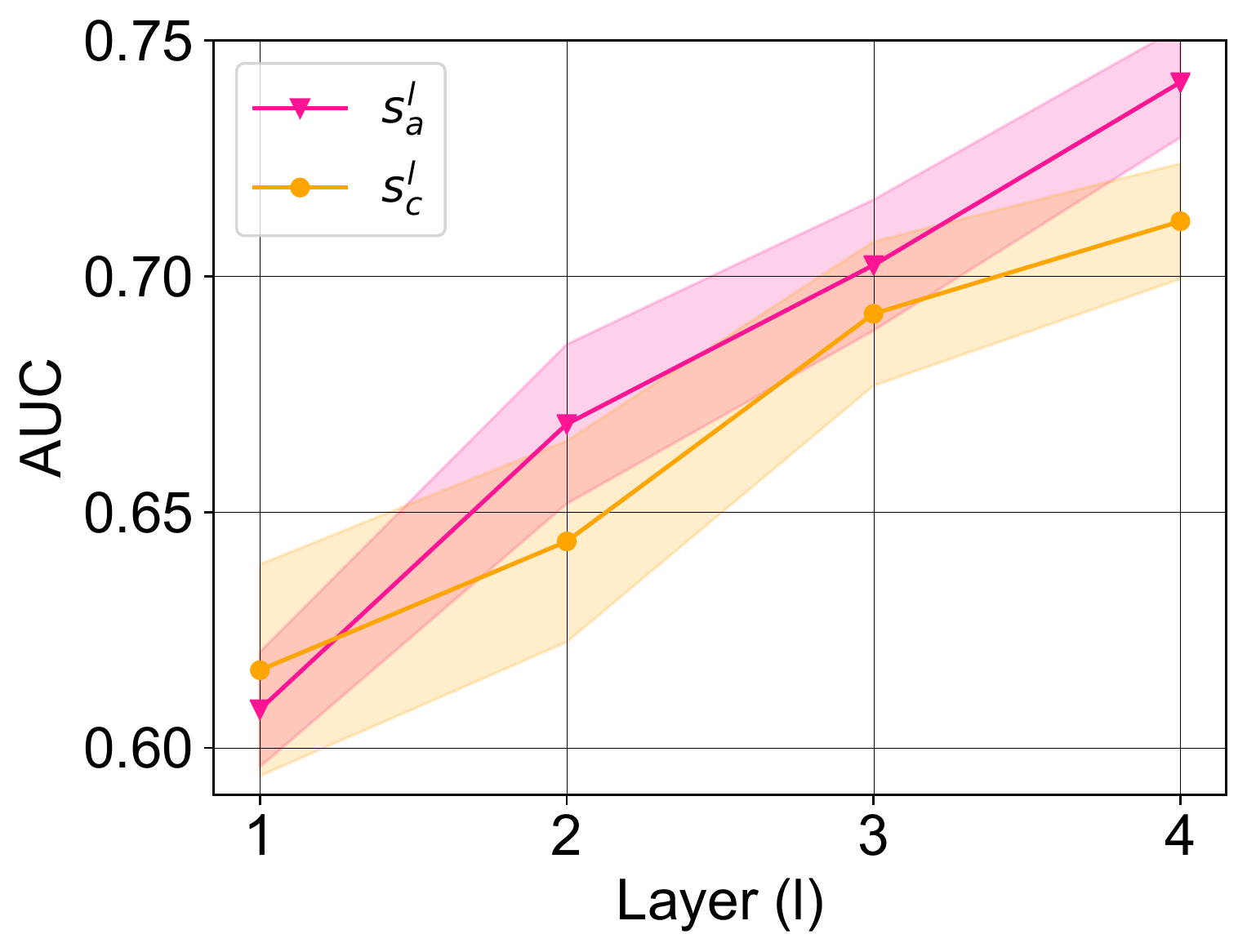}} 
\caption{
The novelty detection performances of DCAE by (a) varying $L$ in $L_{eaf}$ and (b) varying $L$ in $s_c(\boldsymbol{x})$ and $s_a(\boldsymbol{x})$, respectively. 
}
\label{fig: abl_plots}
\vspace{-0.20in}
\end{figure}

For all experiments below, we test upon CIFAR-10 based on the same protocol used above in Sec. \ref{sec_results_nd}.

\subsubsection{Ablation study on model components}

We conduct ablation study to assess the effectiveness of each component in DCAE. Our model can be decomposed into three parts that correspond to the bidirectional modeling by $L_{comp} $, multi-level reconstruction by $L_{eaf}$, and surjective encoding by $L_{inv-z}$. According to this decomposition, we consider three models: (a) DCAE without multi-level reconstruction and surjective encoding, (b) DCAE without surjective encoding, (c) full DCAE. Additionally, we test (d) DCAE without  $L_{eaf}$ and $L_{inv-z}$ but with its encoder output bounded by $\tanh$ activation. This model is equivalent to OCGAN without extra classifier and sampling technique, thus explicitly bounding the range of the latent representations and short of any of our novel approaches to resolve the latent collapse issue.

To measure the novelty detection performance of each model, we employ three different novelty scores: the score by the conventional per-pixel reconstruction error 
$
s_{per-pixel}(x):= \lVert \boldsymbol{x} - \widehat{\boldsymbol{x}} \rVert_1,
$
and the proposed novel score functions $s_c(x)$ and $s_a(x)$.

The results in Table \ref{table: ablation} show that each component of our method contributes to improving the performance of our model. An important aspect to note is that the performance improvement is not captured when the detection is performed by the per-pixel score $s_{per-pixel}$, showing both the importance and effectiveness of proper novelty score function. Moreover, the result on $\tanh$-DCAE w/o $L_{eaf} + L_{inv-z}$ shows that explicitly bounding the latent representation by applying $\tanh$ activation on the encoder output indeed degrades the performance significantly as it collapses the latent representations of the in-class data, thereby deteriorating the in-class reconstruction.

\subsubsection{On multi-Level reconstruction}
The multi-level reconstruction loss $L_{eaf}$ in \eqref{eq: eaf} has been analyzed by varying the number $L$ of ensemble components. We note that the final layer $f_L$ is always used for all cases. The result in Figure \ref{fig: abl_plots} (a) shows that the performance improves as we use larger $L$ for $L_{eaf}$ as suggested by Proposition \ref{prop: final}.

\subsubsection{Choice of hidden layer for novelty score}
We experimentally studied how the novelty detection performance changes as we define the novelty score by  another hidden layer in $D_x$. Specifically, we replaced $f_L$ by $f_l$ in both $s_c$ and $s_a$, defining 
$s^l_c(\boldsymbol{x}) = \lVert f_l(\boldsymbol{x}) - f_l(\widehat{\boldsymbol{x}}) \rVert_1$
and
$s_a^l(\boldsymbol{x}) = 1 - a(f_l(\boldsymbol{x}) - m_l(\boldsymbol{x}) , f_l(\widehat{\boldsymbol{x}}) - \widehat{m}_l(\widehat{\boldsymbol{x}}))$ with $m_l$ and $\widehat{m}_l$ defined similarly by replacing $f_L$ by $f_l$ in $m(x)$ and $\widehat{m}(x)$.
Its comparison is shown in Figure \ref{fig: abl_plots} (b), depicting a clear sign of monotonicity between the performance and the layer depth $l$. The trend validates that deeper layers of $D_x$ capture semantics more effective at differentiating the in-class data from out-class instances.

\section{Conclusion}
We proposed a recontruction-based novelty detector DCAE that induces both fine and exclusive reconstruction of the in-class data by learning its compact and collapse-free latent representations. DCAE successfully attains the desired mechanism by exploiting multi-level reconstruction based on its internal discriminator and vulnerability of the deep encoder to open set risk. Moreover, utilizing the penultimate discriminative layer and the proposed novelty score functions based on it has been validated, both theoretically and experimentally, to be effective for novelty detection inference.
Extensive experiments on public image datasets exhibited strong capability of DCAE on both novelty and adversarial example detection tasks.

\section*{Acknowledgement}
This work was supported by the National Research Foundation of Korea (NRF) grant funded by the Korea government (MSIP) (NO. NRF-2019R1A2C1003306), and by NVIDIA GPU grant program.

\bibliographystyle{IEEEtran}
\bibliography{IEEEexample}

\section{Supplementary}
Here, we provides supplementary materials including proofs for all the propositions in the main paper.
\subsection{Unsupervised vs. Self-Supervised Novelty One-Class Detectors} \label{supp_A}

Althoguh the gap between unsupervised  and self-supervised one-class novelty detectors seems not huge, there still exists a clear distinction between them. Self-supervised novelty detectors differ from unsupervised counterparts in that it requires a prior knowledge of a given dataset.
To elaborate this fact,
we show that only specific data sets can be solved by RotNet \cite{rotnet,rotnet_robust} (which achieves the top performance on object recognition image data sets for novelty detection in the self-supervised setting). In particular, we theoretically prove that RotNet completely fails as a novelty detector on in-class data sets which are closed in rotation and translation. We note that other RotNet-based variants that share the same core mechanism as RotNet thus fail on these in-class data sets in the same manner.\footnote{In fact, all other self-supervised novelty detectors are not much different from RotNet to our best knowledge.}

To prove our claim,  we denote $T_y$ to be a geometric transformation corresponding to rotation and/or translation with $y \in \mathcal{Y}=\{1,\dots, K\}$. (In RotNet of \cite{rotnet}, $T_1$ is the identity and $K=72$.) 
Then RotNet minimizes the following cross entropy loss: 

\begin{equation}
L = \sum_{i=1}^N L(\boldsymbol{x}_i) = - \sum_{i=1}^N \sum_{y=1}^K \log p(y | T_y(\boldsymbol{x}_i))
\end{equation}
where  $\boldsymbol{x}_i$ is sampled from the in-class set $\mathcal{X}_{in}$ and $p(y | T_y(\boldsymbol{x}_i))$ is the posterior given $T_y(\boldsymbol{x}_i)$. In inference, the novelty score is defined as the sum of maximal posteriors on $T_k(\boldsymbol{x})$
\begin{equation}
s(\boldsymbol{x}) = - \sum_{k=1}^K  \underset{y}{\max} \, p(y|T_k(\boldsymbol{x}))
\end{equation}
(There are other ways to define the novelty score but basically those variants are equivalent to this.)

Our claim is that RotNet completely fails if the set $\{T_y\}_{y=1}^K$ is a group that acts on $\mathcal{X}_{in}$, i.e.,
$\mathcal{X}_{in}$ is \textit{closed in}  $T_y$
\begin{equation}
\boldsymbol{x}_{in} \in \mathcal{X}_{in} \quad \implies \quad T_y(\boldsymbol{x}_{in}) \in \mathcal{X}_{in}.
\end{equation}
\begin{prop*}
If $\{T_y\}_{y=1}^K$ is a group that acts on $\mathcal{X}_{in}$, then $s(\boldsymbol{x})$ is maximal for every $\boldsymbol{x} \in \mathcal{X}_{in}$.
\end{prop*}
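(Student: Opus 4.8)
The plan is to identify the Bayes-optimal classifier that the cross-entropy loss $L$ drives the network toward, show that on the RotNet data-generating process this optimal posterior is \emph{uniform} over labels whenever the input is built from an in-class sample, and then read off that $s(\boldsymbol{x})$ hits its global maximum on all of $\mathcal{X}_{in}$. Concretely, the RotNet training distribution is: draw $\boldsymbol{x}\sim\mu_{in}$ from the in-class data, draw $y$ uniformly from $\{1,\dots,K\}$, and output the pair $(T_y(\boldsymbol{x}),y)$. The first step is the standard fact that $L$ is globally minimized exactly when $p(\cdot\mid\boldsymbol{x}')$ equals the true conditional distribution of the label given the observed transformed image under this process, so it suffices to compute that true posterior.

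The second step is the group-theoretic computation. Because $G=\{T_y\}_{y=1}^K$ is a group acting on $\mathcal{X}_{in}$, each $T_y$ restricts to a bijection of $\mathcal{X}_{in}$, and in particular $\mathcal{X}_{in}$ is closed under inverses $T_y^{-1}\in G$. Model the class "closed under rotation and translation" in the natural way, by taking $\mu_{in}$ to be $G$-invariant (rotations and translations have unit Jacobian, so this is the right formalization and it is forced once one insists the class has no preferred pose). Then for $\boldsymbol{x}'$ in the support, $p(\boldsymbol{x}'\mid y)=\mu_{in}(T_y^{-1}\boldsymbol{x}')=\mu_{in}(\boldsymbol{x}')$ is independent of $y$, and with the uniform prior on $y$ Bayes' rule gives $p(y\mid\boldsymbol{x}')=1/K$ for every $y$. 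The same conclusion can be seen combinatorially: given $T_k(\boldsymbol{x})$ with $\boldsymbol{x}\in\mathcal{X}_{in}$, for every label $y$ the preimage $T_y^{-1}T_k(\boldsymbol{x})=T_{y^{-1}k}(\boldsymbol{x})$ again lies in $\mathcal{X}_{in}$, so every label is exactly as consistent with the observation as every other, and no classifier can beat chance.

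The third step concludes. Substituting $p(y\mid T_k(\boldsymbol{x}))\equiv 1/K$ into the novelty score gives $s(\boldsymbol{x})=-\sum_{k=1}^{K}\max_y p(y\mid T_k(\boldsymbol{x}))=-\sum_{k=1}^{K}\tfrac1K=-1$ for every $\boldsymbol{x}\in\mathcal{X}_{in}$. On the other hand, for an arbitrary input the maximum entry of any probability vector over $K$ classes is at least $1/K$, so $\max_y p(y\mid T_k(\cdot))\ge 1/K$ for each $k$ and hence $s(\cdot)\le -1$ everywhere. Thus $s$ attains its maximal value $-1$ on all of $\mathcal{X}_{in}$, the threshold rule $s(\boldsymbol{x})>\tau$ cannot separate in-class from out-class, and RotNet fails, which is the claim; the failure of any RotNet variant using an equivalent score follows identically.

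The main obstacle is entirely in the second step, specifically in two places that deserve explicit statement rather than calculation: (i) that minimizing the cross-entropy $L$ genuinely yields the true conditional distribution (standard, but the argument rests on it), and (ii) that "closed under rotation and translation" should be read as $G$-invariance of the sampling distribution $\mu_{in}$, not merely set-level closure of the support — these coincide for the intended examples but the former is the hypothesis actually used. The group manipulation itself ($T_y^{-1}\in G$, $T_y^{-1}T_k=T_{y^{-1}k}$, closure) is routine, and the final bounding of $s$ is immediate.
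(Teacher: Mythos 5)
Your proof is correct and follows essentially the same route as the paper's: closure of $\mathcal{X}_{in}$ under the group action forces the optimal posterior $p(y\mid T_k(\boldsymbol{x}))$ to be uniform, which makes $s(\boldsymbol{x})=-1$, the global maximum of the score, on every in-class sample. You simply make explicit what the paper leaves implicit — that cross-entropy minimization yields the true conditional, that the relevant hypothesis is $G$-invariance of the sampling distribution (automatic for a finite dataset closed under the action), and the bound $s\leq -1$ — so this is a fleshed-out version of the paper's two-sentence argument rather than a different one.
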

\begin{proof}
Since $\mathcal{X}_{in}$ is closed in $T_y$, for every $\boldsymbol{x} \in \mathcal{X}_{in}$, the posteriors $p(y=k|\boldsymbol{x})$ are minimized for all $k \in \mathcal{Y}$. This forces $p(y|\boldsymbol{x})$ to be uniform and thus induces maximal novelty score on every in-class sample $\boldsymbol{x}$.
\end{proof}

Therefore, RotNet fails on geometrically closed data sets \cite{crowdsourcing,planktonset}. For this reason, RotNet might not be suitable in applications where the data sets contain various geometrical symmetries. On the other hand, unsupervised novelty detectors (such as reconstruction-based detectors and one-class classifiers) are not constrained by such priors and thus produce consistent performance \cite{texture_anomaly,texture_one_class}.

One more aspect to note is that the application of self-supervised RotNet-based models is confined to image data sets where rotation and translation are properly defined. On the other hand, most of the reconstruction-based models including ours are applicable to general data sets.

\subsection{Algorithm of DCAE}

The detailed algorithm of DCAE is given in Algorithm \ref{alg: model}. 
Note that in the actual training, we want the contribution of the reconstruction losses $L_{eaf}$ and $L_{inv-z}$ to linearly increase. Thus, we multiply the losses by $c^{(t)}$ that linearly increases from $0$ to $1$. This is to \textit{synchronize} the reconstruction losses with the adversarial losses; adversarial learning is relatively slower than learning reconstruction.

\begin{algorithm}[!t]
\caption{Algorithm for DCAE}
\label{alg: model}
\begin{algorithmic}[1]
\REQUIRE $E, G, D_z, D_x$ with spectral normalized $\theta_E$, $\theta_G$, $\theta_{D_z}$, $\theta_{D_x}$.
In-class data $\mathcal{X}_{in}$, $T$ total iteration, $N$ batch size, $\alpha_z$, $\Adam$ minimizer

\FOR{$t=1,\dots,T$}
\STATE $\{\boldsymbol{x}_1,\dots,\boldsymbol{x}_N\} \sim \mathcal{X}_{in}$
\STATE $\{\boldsymbol{z}_1,\dots,\boldsymbol{z}_N\} \sim \mathcal{U}[-1,1]^{d_z}$
\STATE $c^{(t)} = \frac{t}{T}$
\STATE $\theta_{D_z}, \theta_{D_x}
\leftarrow \underset{\theta_{D_z}, \theta_{D_x}}{\Adam} ( - L_{comp})$
\STATE $\theta_{E}, \theta_{G}$
$\leftarrow$ $ \underset{\theta_{E}, \theta_{G}}{\Adam} ( L_{comp}  + c^{(t)} L_{aef} + c^{(t)} \alpha_z L_{inv-z})$
\ENDFOR
\RETURN The trained reconstructor $G \circ E$ and the penultimate layer $f_L$ of $D_x$
\end{algorithmic}

\end{algorithm}

\subsection{Architectures and Hyperparameters in Detail}

\begin{figure*}
\centering
\centering
\subfloat[]{\includegraphics[width=0.25\textwidth]{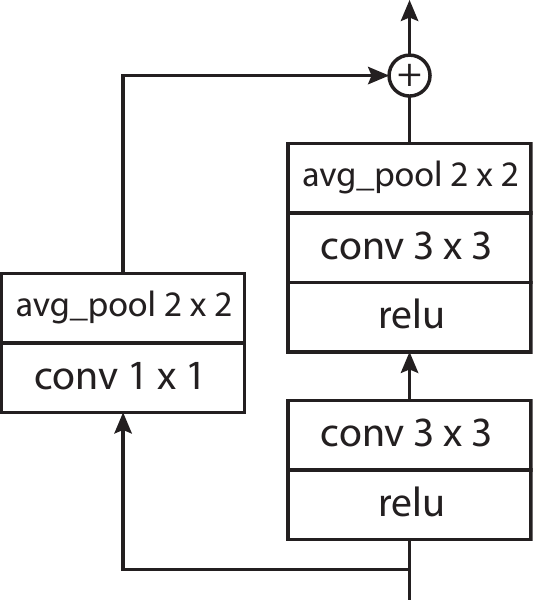}}
\hspace{5mm}
\subfloat[]{\includegraphics[width=0.25\textwidth]{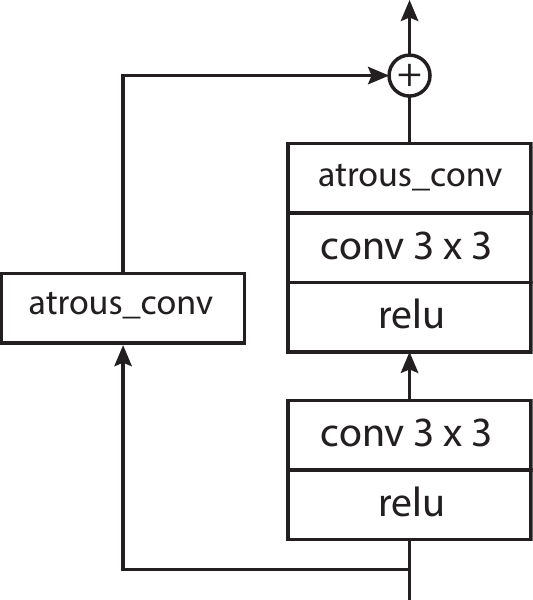}} 
\hspace{5mm}
\subfloat[]{\includegraphics[width=0.25\textwidth]{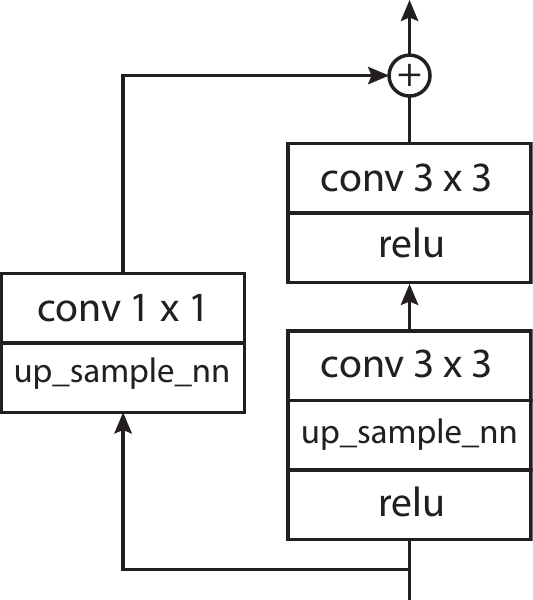}}
\caption{The residual blocks with (a) average pooling, (b) atrous convolution, and (c) nearest neighbor upsampling.}
\label{fig: res_blocks}
\end{figure*}
\begin{table}[t]
\begin{center}
\begin{tabular}{| l | l | l |}
\hline
Layer & Output & Filter  \\
\hline
FC, LeakyReLU(0.2) & $200$  & $ 100 \to 200$ \\
\hline
FC, LeakyReLU(0.2) & $200$ & $200 \to 200$ \\
\hline
FC (linear) & $1$ & $200 \to 1$ \\
\hline
\end{tabular}
\end{center}
\caption{The architecture of the input discriminator $D_z$.}
\label{table: arch_Dz}
\end{table}

\begin{table}[t]
\begin{center}
\begin{tabular}{| l | l | l |}
\hline
Layer & Output & Filter  \\
\hline
Conv & $32 \times 32 \times 32$  & $ 3 \to 32$ \\
\hline
\shortstack{ResBlock\\AvgPool} & $16 \times 16 \times 64$ & $ 32 \to 64$ \\
\hline
\shortstack{ResBlock\\AvgPool} & $8 \times 8 \times 128$ & $64 \to 128$ \\
\hline
\shortstack{ResBlock\\AvgPool} & $4 \times 4 \times 256$ & $128 \to 256$  \\
\hline
ReLU & $4 \times 4 \times 256$ &  \\
\hline
global avg pool & $1 \times 1 \times 256$ & $256 \to 256$ \\
\hline
FC (linear) & $1$ & $256 \to 1$ \\
\hline
\end{tabular}
\end{center}
\caption{The architecture of the input discriminator $D_x$.}
\label{table: arch_Dx}
\end{table}

\begin{table}[t]
\begin{center}
\begin{tabular}{| l | l | l |}
\hline
Layer & Output & Filter  \\
\hline
Conv & $32 \times 32 \times 32$  & $ 3 \to 32$ \\
\hline
\shortstack{ResBlock\\atrous Conv} & $16 \times 16 \times 64$ & $ 32 \to 64$ \\
\hline
\shortstack{ResBlock\\atrous Conv} & $8 \times 8 \times 128$ & $64 \to 128$ \\
\hline
\shortstack{ResBlock\\atrous Conv} & $4 \times 4 \times 256$ & $128 \to 256$  \\
\hline
ReLU, reshape & $4 \cdot 4 \cdot 256$ &  \\
\hline
FC (linear) & $100$ & $4\cdot 4 \cdot 256 \to 100$ \\
\hline
\end{tabular}
\end{center}
\caption{The architecture of the encoder $E$.}
\label{table: arch_E}
\end{table}

\begin{table}[t]
\begin{center}
\begin{tabular}{| l | l | l |}
\hline
Layer & Output & Filter  \\
\hline
FC (linear) & $4 \cdot 4 \cdot 256 $  & $ 100 \to 4\cdot 4 \cdot 256$ \\
\hline
reshpae & $4 \times 4 \times 256$ & \\
\hline
\shortstack{ResBlock\\UpSample} & $8 \times 8 \times 128$ & $ 256 \to 128$ \\
\hline
\shortstack{ResBlock\\UpSample} & $16 \times 16 \times 64$ & $128 \to 64$ \\
\hline
\shortstack{ResBlock\\UpSample} & $32 \times 32 \times 32$ & $64 \to 32$  \\
\hline
\shortstack{ResBlock\\UpSample} & $32 \times 32 \times 32$ & $32 \to 32$  \\
\hline
Conv & $32 \times 32 \times (3 \text{ or } 1)$ & $32 \to (3 \text{ or } 1)$ \\
\hline
\end{tabular}
\end{center}
\caption{The architecture of the encoder $G$.}
\label{table: arch_G}
\end{table}

We provide the detailed description of the architectures and hyperparameters here. All our networks are residual CNNs except the latent discriminator $D_z$, which is a multilayer perceptron with two fully connected hidden layers. To define the ensemble loss $L_{eaf}$, we pick $f_1$ from the first convolution layer and $f_2, f_3$ and  $f_4 =f_L$ from the residual block outputs.

All networks $D_x, D_z, G, E$ are spectral-normalized \cite{sngan}. To train the network we use Adam optimizer with $\beta_1= 0$ and $\beta_2 = 0$. For the learning rates, we follow TTUL \cite{ttul}, thereby setting learning rates for $(D_x,D_z)$ and $(G,E)$ differently: $\text{lr}_{D_x} = \text{lr}_{D_z} = 0.0004$ and $\text{lr}_G = \text{lr}_E = 0.0001$.
The input images are scaled to $[-1,1]$.
For all experiments, the total number of training iterations is 500K, which is relatively long but necessary to stabilize adversarial learning.

In the implementation, the hinge version \cite{sngan} of the adversarial loss is adopted.
The architectures for our networks are described in Tables \ref{table: arch_Dz}, \ref{table: arch_Dx}, \ref{table: arch_E}, and \ref{table: arch_G} with their residual blocks in Figure \ref{fig: res_blocks}. 

As to the atrous convolution in the residual block, it is a 2-dilated convolution  with no padding.
Its kernel size is chosen so as to make the spatial size of the output map to be $2$ times smaller than the input map. 

We adopt atrous convolution for for downsampling in $E$ as it makes encoding less degenerate than downsampling by average pooling. We experimentally observed that downsampling by atrous convolution in $E$ gives slightly better performance than downsampling by average pooling. However, for the discriminator $D$, average pooling was much better at stabilizing the adversarial optimization.

As to the nearest neighbor upsampling used in the residual block, its upsampling rate is fixed by $2$.

To initialize the weights of the networks, we adopt He initialization \cite{he_initialization}. We note that GANs are particularly sensitive to the choice of weight initialization. We use Adam optimizer with $\beta_1=0$ and $\beta_2 = 0.9$. The choice for these hyperparameters is motivated by recent spectral-normalized GANs \cite{cgan_proj,sagan}. 
For each iteration, we take a sample batch of size $N = 100$ for both $\boldsymbol{x}$ and $\boldsymbol{z}$.

For the penultimate layer $f_L$, we assumed that it is the final activated feature map in the derivation of our method and theory in the main paper. In the practical implementation, however, we adopted $f_L$ to be its pre-activated part $h_L$, which satisfies $ f_L= a(h_L)$. Since the activation function $a$ is ReLU, $h_L$ contains all information of $f_L$ in a linear format $h_L = [h^{-}_L, f_L]$ where $h^{-}_L$ is the negative parts of $h_L$. As the pre-activated part contains more information, it gives a slightly better performance (about 1\% higher in AUC, for example, over CIFAR-10 experiments).

\subsection{Proofs}
Here, we provide proofs of the propositions given in the main text.

\begin{proof}[Proof of Proposition 1]
Let $\boldsymbol{z} \in \mathcal{M}$. Then, $z = E(G(z))$ for any $z \in \mathcal{M}$. Since, $G(z) \in \mathcal{X}_{in}$, we obtain the desired.
\end{proof}

\begin{proof}[Proof of Proposition 2]
This is trivial because $G$ is a neural network with ReLU activations and thus Lipschitz continuous.
\end{proof}

\begin{proof}[Proof of Proposition 3]
Note that $ D_x(x) = \sigma(w^T f_L(x) + b)$ with sigmoid activation function $\sigma$, a weight vector $w \in \mathbb{R}^{d_{f_L}}$, and a bias $b \in \mathbb{R}$. By the given assumption, $w^T f_L(x_{in}) \geq w^T f_L(G(z_{out})) $ for all $x_{in}$, thus $f_L(\mathcal{X}_{in})$ is linearly separable from $f_L(G(z_{out})))$.
\end{proof}

\begin{proof}[Proof of Proposition 4]
For each layer $f_l$,
\begin{equation}
\lVert f_l(\boldsymbol{x}) - f_l(\widehat{\boldsymbol{x}}) \rVert \leq \lVert g \rVert_{\text{Lip}} \lVert f_{l-1}(\boldsymbol{x}) - f_{l-1}(\widehat{\boldsymbol{x}}) \rVert
\end{equation}
where $g$ is the function such that $f_l(\boldsymbol{x}) = g(f_{l-1}(\boldsymbol{x}))$. Here, the bound is tight. Thus, simply cascading the inequality relation through $f_{l+1}, \dots, f_L$ finishes the proof.
\end{proof}

\begin{proof}[Proof of Proposition 5]
Observe
\begin{alignat}{2}
\lvert m(\boldsymbol{x}) - \widehat{m}(\widehat{\boldsymbol{x}}) \rvert & 
= \left\vert \dfrac{1}{d_{f_L}} \left( \sum_k f_L(\boldsymbol{x})_k - \sum_k f_L(\widehat{\boldsymbol{x}})_k
\right)
\right\vert
&& \\
& 
= \dfrac{1}{d_{f_L}} \left\vert
\sum_k f_L(\boldsymbol{x})_k - f_L(\widehat{\boldsymbol{x}})_k
\right\vert
&&\\
&
\leq \dfrac{1}{d_{f_L}} \sum_k \left\vert
f_L(\boldsymbol{x})_k - f_L(\widehat{\boldsymbol{x}})_k
\right\vert
&& \\
& 
= \dfrac{1}{d_{f_L}} \lVert f_L(\boldsymbol{x}) - f_L(\widehat{\boldsymbol{x}}) \rVert_1
&& \\
& = \dfrac{1}{d_{f_L}} s_c(\boldsymbol{x})&&
\end{alignat}
by the triangle inequality of $L_1$ norm, finishing the proof.
\end{proof}








%
%
%

\end{document}